\documentclass[letterpaper]{article} 
\usepackage{aaai2026}  
\usepackage{times}  
\usepackage{helvet}  
\usepackage{courier}  
\usepackage[hyphens]{url}  
\usepackage{graphicx} 
\urlstyle{rm} 
\usepackage{natbib}  
\usepackage{caption} 
\frenchspacing  
\setlength{\pdfpagewidth}{8.5in}  
\setlength{\pdfpageheight}{11in}  
%
\usepackage{algorithm}
\usepackage{algorithmic}

\usepackage{subcaption}
\usepackage{comment}
\usepackage{multirow}
\usepackage{colortbl}  
\usepackage{diagbox}
\usepackage{nicefrac}
\usepackage{amsmath}
\newtheorem{theorem}{Theorem}
\newtheorem{proof}{Proof}

\usepackage{amssymb}
\usepackage{booktabs}

%
\usepackage{newfloat}
\usepackage{listings}
\DeclareCaptionStyle{ruled}{labelfont=normalfont,labelsep=colon,strut=off} 
\lstset{%
	basicstyle={\footnotesize\ttfamily},
	numbers=left,numberstyle=\footnotesize,xleftmargin=2em,
	aboveskip=0pt,belowskip=0pt,%
	showstringspaces=false,tabsize=2,breaklines=true}
\floatstyle{ruled}
\newfloat{listing}{tb}{lst}{}
\floatname{listing}{Listing}
%
\pdfinfo{
/TemplateVersion (2026.1)
}

\setcounter{secnumdepth}{0} 

%


\title{Exploring Transferability of Self-Supervised Learning by\\ Task Conflict Calibration}
\author{
    Huijie Guo\textsuperscript{\rm 1}\equalcontrib,
    Jingyao Wang\textsuperscript{\rm 1,2}\equalcontrib,
    Peizheng Guo\textsuperscript{\rm 1,2},
    Xingchen Shen\textsuperscript{\rm 1}, 
    Changwen Zheng\textsuperscript{\rm 1,2}\thanks{Corresponding Author.}, 
    Wenwen Qiang\textsuperscript{\rm 1,2,}\thanks{CO-Corresponding Author.}
}
\affiliations{
    \textsuperscript{\rm 1}Institute of Software Chinese Academy of Sciences,\\
    \textsuperscript{\rm 2}University of Chinese Academy of Sciences,\\
    \{guohuijie,wangjingyao2023,guopeizheng2025,xingchen,changwen,qiangwenwen\}@iscas.ac.cn
%
}

\usepackage{bibentry}

\begin{document}

\maketitle

\begin{abstract}
    In this paper, we explore the transferability of SSL by addressing two central questions: (i) what is the representation transferability of SSL, and (ii) how can we effectively model this transferability? Transferability is defined as the ability of a representation learned from one task to support the objective of another. 
    Inspired by the meta-learning paradigm, we construct multiple SSL tasks within each training batch to support explicitly modeling transferability. Based on empirical evidence and causal analysis, we find that although introducing task-level information improves transferability, it is still hindered by task conflict. To address this issue, we propose a Task Conflict Calibration (TC$^2$) method to alleviate the impact of task conflict. Specifically, it first splits batches to create multiple SSL tasks, infusing task-level information. Next, it uses a factor extraction network to produce causal generative factors for all tasks and a weight extraction network to assign dedicated weights to each sample, employing data reconstruction, orthogonality, and sparsity to ensure effectiveness. Finally, TC$^2$ calibrates sample representations during SSL training and integrates into the pipeline via a two-stage bi-level optimization framework to boost the transferability of learned representations. Experimental results on multiple downstream tasks demonstrate that our method consistently improves the transferability of SSL models.
\end{abstract}

\begin{links}
    \link{Code}{https://github.com/PaulGHJ/TC2}
\end{links}

\section{Introduction}

Traditional unsupervised learning lacks ground-truth labels and often performs worse than supervised learning. However, supervised methods heavily rely on costly manual annotations and still face generalization issues~\cite{bousquet2002stability,mignacco2020role}. As a compelling alternative, Self-Supervised Learning (SSL) has shown strong performance in image classification and object recognition~\cite{schiappa2023self,drmac,liu2022graph}, sometimes even surpassing supervised methods.

Existing self-supervised learning (SSL) methods primarily focus on designing reliable supervisory signals or introducing inductive biases to guide the model in learning effective feature representations. For example, discriminative SSL (D-SSL) methods construct positive pairs by generating different augmented views of the same sample and treat other samples as negatives~\cite{chen2020simple,he2020momentum,caron2020unsupervised}. Generative SSL (G-SSL) methods mask part of the input and train the model to reconstruct the masked regions, using the reconstruction error as the supervisory signal~\citep{he2022masked,baobeit}. However, these methods often overlook a critical question: how do such learning paradigms ensure the transferability of learned features? This question is not easy to answer and helps explain why these methods often perform poorly on out-of-distribution (OOD) data or transfer tasks. Additionally, this limits the theoretical insights and methodological advances needed to improve their transfer generalization capabilities.

In this paper, we focus on two key questions: (1) What is representation transferability? (2) How can it be effectively modeled?
For the first question, we take an intuitive perspective: if a representation has good transferability, it should perform well across various downstream tasks. Based on this intuition, we define task-level transferability as the ability of a representation learned from one task to support the objective of another~\citep{yosinski2014transferable}. To address the second question, we consider meta-learning as an established approach to modeling transferability. Prior works~\cite{ni2021close} suggest that SSL can be regarded as a special case of meta-learning. The main difference lies in task organization within each mini-batch and learning mechanism.
Meta-learning typically involves multiple tasks, whereas SSL is generally considered to involve only a single task. Moreover, meta-learning is optimized in a bi-level manner. It achieves strong transferability through multi-task bi-level training, which implicitly models a distribution over tasks. This allows meta-learning to generalize well to unseen tasks. Motivated by this, we propose enhancing the transferability of SSL by introducing a meta-learning mechanism. Specifically, we construct multiple tasks within each mini-batch during training and train SSL models in a bi-level manner.

\begin{figure*}[t]
    \centering
    \begin{subfigure}[b]{0.265\textwidth}
         \includegraphics[width=\textwidth]{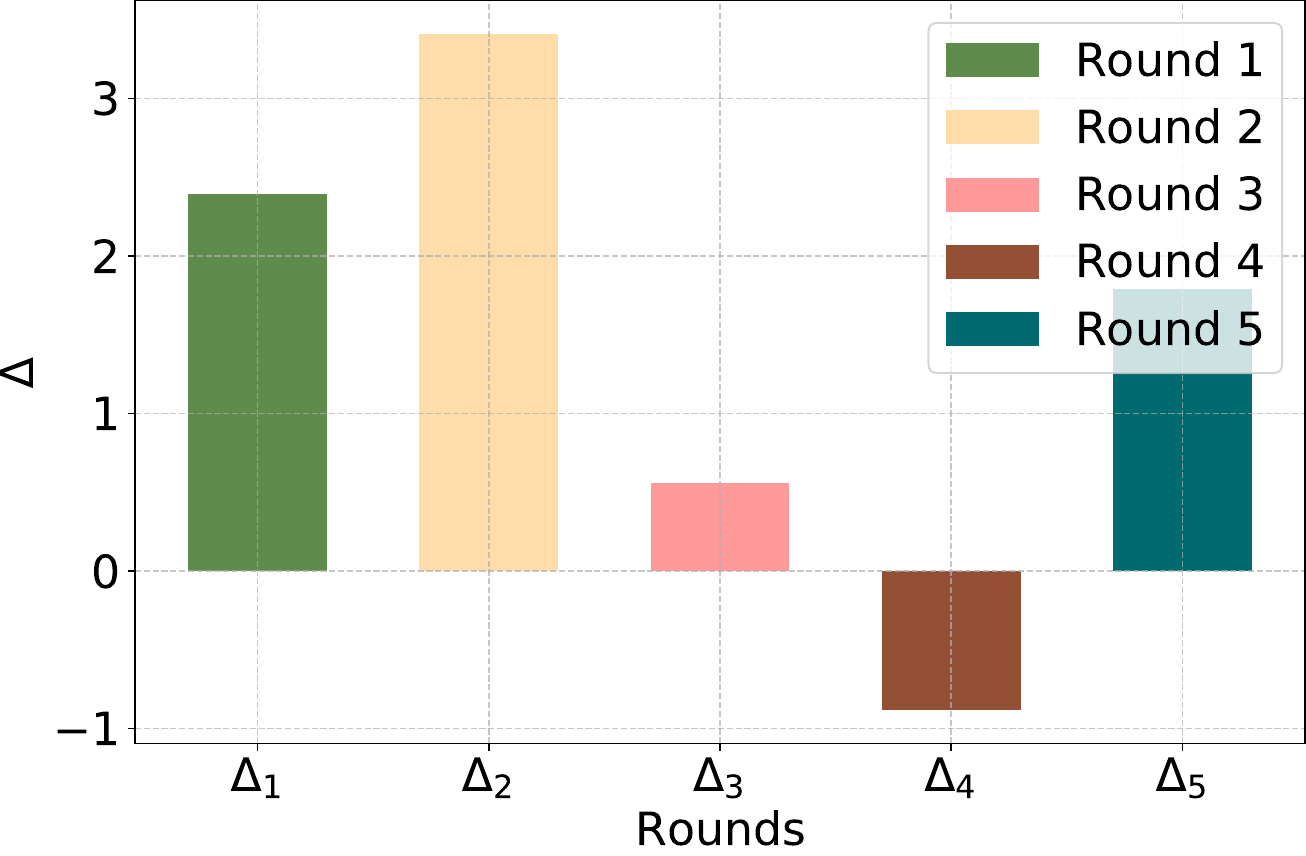}
         \caption{}
         \label{fig:motivation_comparison}
    \end{subfigure}
    \hfill
    \begin{subfigure}[b]{0.30\textwidth}
         \includegraphics[width=\textwidth]{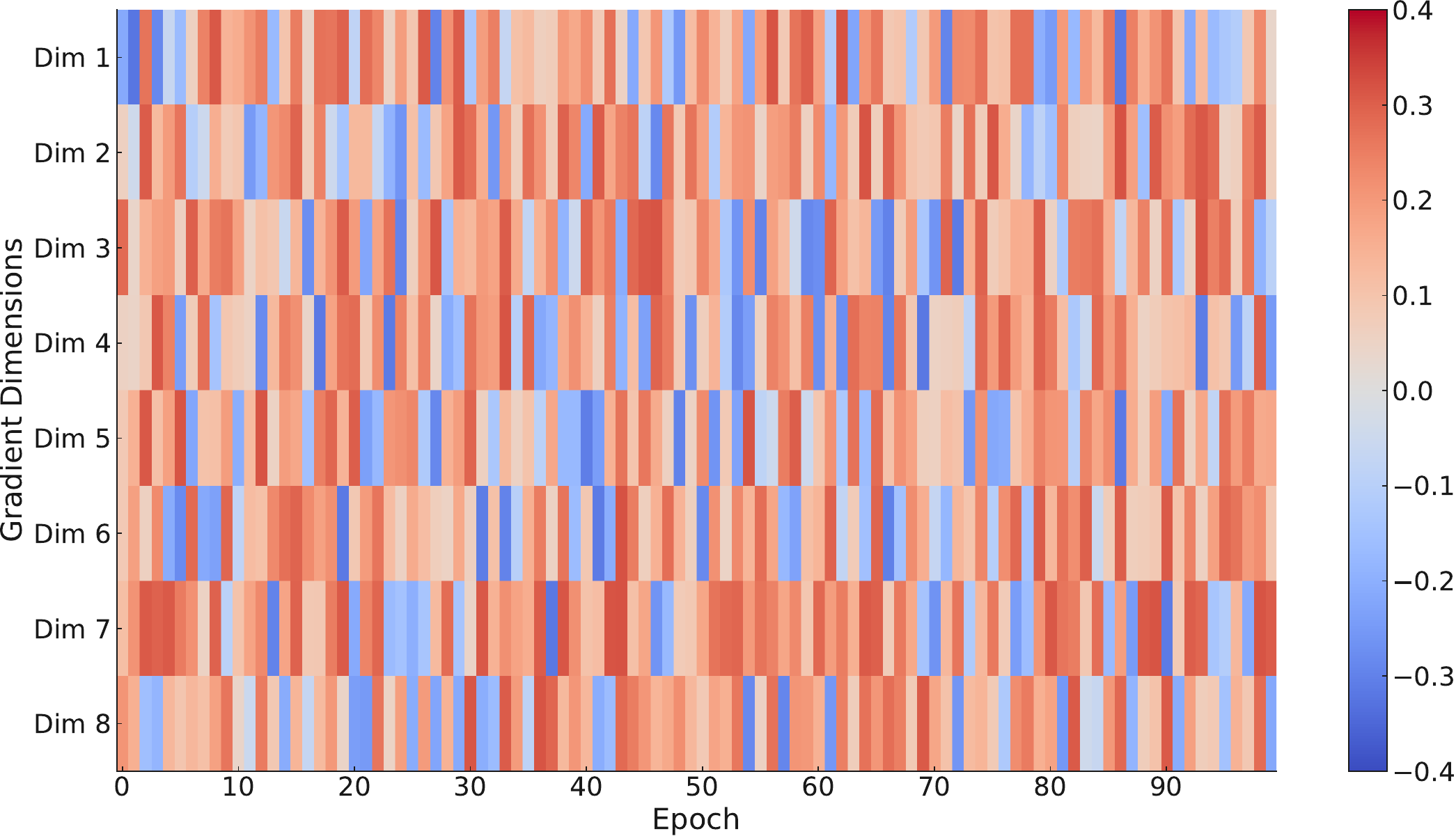}
         \caption{}
         \label{fig:motivation_subfig1}
    \end{subfigure}
    \hfill
    \begin{subfigure}[b]{0.30\textwidth}
         \includegraphics[width=\textwidth]{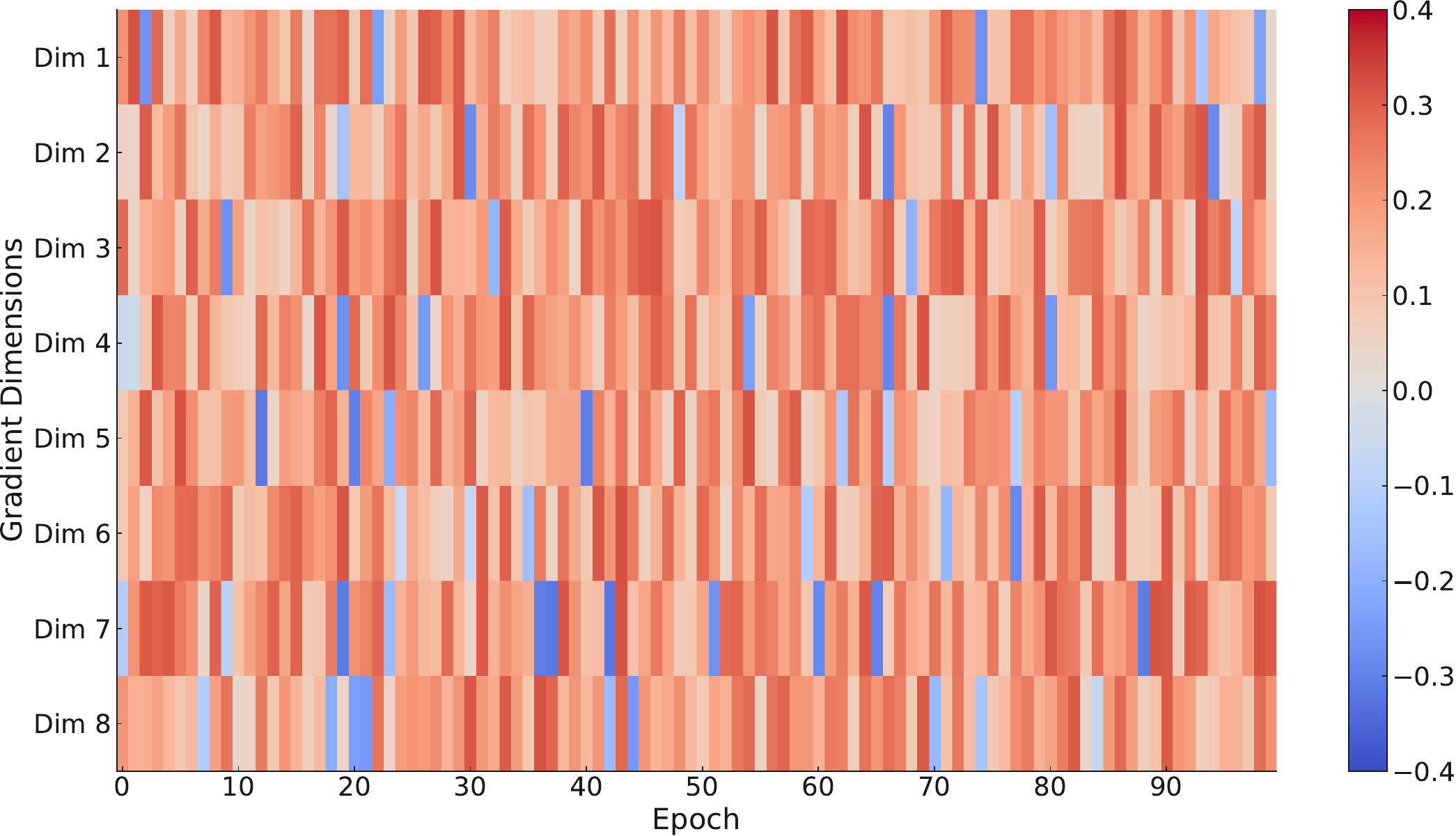}
         \caption{}
         \label{fig:motivation_subfig2}
    \end{subfigure}
    \caption{Effect of task-level information on CIFAR-100. (a) shows the effect of SimCLR trained on ImageNet and tested on CIFAR-100 before and after introducing task-level information in five runs. (b) and (c) show the training process corresponding to the worst and best rounds in five evaluations, visualizing the gradient similarity between tasks.}
    \label{fig:motivation_dimension}
\end{figure*}

We further explore the effectiveness of the above meta-learning-like SSL method on transferability. In the experiment, we uniformly split each training batch into two subsets, which serve as two independent SSL tasks. Following the standard SSL training framework, each subset generates two augmented views via random data augmentation for training. We use SimCLR as the baseline, training on ImageNet-100 and testing on other datasets. First, we evaluate the Top-1 accuracy without task construction. Next, we assess performance with the multi-task learning mechanism through five independent training rounds and calculate the improvement for each round. From Figure~\ref{fig:motivation_comparison}, the results demonstrate that incorporating task-level information significantly improves model performance, but the improvement is not consistent, with a nearly 3\% performance gap between the best and worst outcomes, and some cases even showing negative gains. Further analysis reveals that gradient conflict between tasks during training significantly affects transferability, as shown in Figure \ref{fig:motivation_subfig1} and~\ref{fig:motivation_subfig2}. Therefore, when introducing task-level information enhances transferability, inter-task gradient conflict remains an important issue.

To better understand the underlying causes of task conflict and its impact on transferability, we analyze it through both data generation and causal inference. Specifically, we construct a Structural Causal Model (SCM) to illustrate these relationships (as shown in Figure~\ref{fig:scm_1}), where $Y_i$, $Y_j$ are task labels, $F_{i,j}^s$ is the shared generative factors, and $F_i^u$, $F_j^u$ are task-specific factors. In meta-learning-like SSL, the model estimates generative factors from multiple tasks and predicts labels accordingly. Due to the shared training across tasks, the extracted factors inevitably mix semantics from all tasks(Figure~\ref{fig:scm_2}). As shown in Theorem~\ref{thm1}, if task-specific factors are not correctly identified, these factors from other tasks act as confounders, hindering target task learning—this phenomenon is termed task conflict. It degrades per-task learning and limits the effectiveness of modeling task distributions, explaining the inconsistent transfer performance observed in meta-learning-like SSL. Moreover, these generative factors can be seen as semantic vectors composing the feature space. Since different tasks may involve distinct semantics, task conflict implies that target features may be contaminated by irrelevant semantics. Thus, enhancing SSL transferability requires calibrating each sample’s feature representation to retain only task-relevant semantics within the meta-learning-like framework.

Based on the above analysis, we propose a new method, called Task Conflict Calibration (TC$^2$), to alleviate task conflict when explicitly modeling the transferability for SSL methods. 
First, TC$^2$ constructs multiple SSL tasks through batch splitting, which introduces task-level information into the model. 
Second, TC$^2$ learns a factor extraction network $f_v$ to generate causal generative factors for all tasks and a weight extraction network $f_w$ to generate a weight of a single sample of each task. We leverage techniques such as data reconstruction, orthogonality, and sparsity to ensure that the learned \( f_v \) can effectively generate causal generative factors suitable for multiple tasks, while the learned function \( f_w \) produces a weight matrix of semantic vectors that are specifically relevant to the target task. Third, based on \( f_v \) and \( f_w \), we formulate the final version of TC\(^2\), which serves to calibrate sample representations during the SSL training process. Finally, we embed TC\(^2\) into the SSL training pipeline through a two-stage bi-level optimization framework to enhance the transferability of learned representations. Extensive transfer learning and ablation experiments demonstrate the effectiveness of TC\(^2\).
Our contributions include:
\begin{itemize}
\item  We find that simply formulating the SSL training process as a meta-learning paradigm does not guarantee improved feature transferability. Through causal analysis, we demonstrate this limitation stems from task conflict.
\item  We propose a method to alleviate task conflict and improve the transferability of SSL models, leveraging a two-stage bi-level optimization mechanism.
\item Experimental results on multiple downstream tasks demonstrate the effectiveness of our method while preserving the model’s inherent representation capabilities.
\end{itemize}

\section{Related Work}
\label{r_work}

\textbf{Self-Supervised Learning} (SSL) has demonstrated remarkable performance in multiple fields, even rivaling supervised learning. Early SSL methods focused on designing pretexts~\cite{albelwi2022survey,doersch2015unsupervised} to provide supervisory information for the unlabeled data. With the rise of contrastive learning, SSL gradually turns to learning more generalizable representations based on invariance to data augmentation. Typical contrastive learning constructs positive sample pairs through data augmentation so that positive pairs are close and negative pairs are separated in the embedding space. SimCLR~\cite{chen2020simple} introduces a simple yet effective contrastive learning framework, where the model is trained to maximize the consistency between positive pairs and minimize the consistency between negative pairs through the InfoNCE loss~\cite{gutmann2010noise}. To reduce the computational cost caused by batch size, MoCo~\cite{he2020momentum} uses a memory bank to increase the number of negative samples and a momentum encoder to learn representations. SwAV~\cite{caron2020unsupervised} enforces consistency between cluster assignments from different augmented views of the same image, rather than computing comparisons of sample pairs. BYOL~\cite{grill2020bootstrap} and SimSiam~\cite{chen2021exploring} only constrain the consistency between positive samples to avoid trivial solutions through the online and target networks. Additionally, some studies aim to reduce redundancy among features\cite{liu2022self,bardes2022vicreg}. Barlow Twins~\cite{zbontar2021barlow} endeavors to make the normalized cross-correlation matrix of the augmented embeddings close to the identity matrix. 

\textbf{Transferability} refers to the ability of a model trained on one task to be transferred to another~\cite{pan2009survey,t2mac,PMAE}. 
The interplay between out-of-distribution and transfer performance of convolutional neural networks is investigated by \cite{djolonga2021robustness}.
According to \cite{ying2018transfer}, achieving optimal transferability often requires exhaustive search or the use of diverse task distributions.
Multi-task  learning~\cite{standley2020tasks} makes the model more adaptable by sharing knowledge among multiple related tasks, thereby improving its transferability to new ones. 
The connection between the structure of vision tasks and transferability is highlighted by \citep{zamir2018taskonomy}, which explores which tasks can be effectively transferred to any target task. 
To enable fast adaptation to new tasks with limited labels, meta-learning leverages many small-sample tasks to train a base learner~\cite{hospedales2021meta,immac}.
The connection between SSL and meta-learning to improve the transferability of self-supervised models is explored by some work\cite{ni2021close}.

Existing SSL methods construct models based on samples, which can be treated as an in-distribution problem for a specific task, without explicitly modeling transferability from the task level. 
In this paper, we explicitly model transferability for SSL by constructing multiple tasks and improving the transferability by resolving task conflict issues.

\section{Problem Analysis and Motivation}
\label{Motiavtion}

This section begins by revisiting the self-supervised learning paradigm, highlighting the implicit task structure underlying it. We then explore the inherent connection between self-supervised learning and meta-learning through the lens of task modeling. Building on this, we empirically investigate how introducing task-level information can lead to task conflicts, which hinder the transferability of learned representations. Finally, we analyze the root causes of these conflicts from a causal perspective.

\subsection{Rethinking SSL from a Task Perspective}

In the training phase, data is organized into mini-batches $\mathcal{X}_{tr} = \{x_i\}_{i=1}^N$. In D-SSL methods such as SimCLR~\citep{chen2020simple} and Barlow Twins~\citep{zbontar2021barlow}, each sample is stochastically augmented into two views $(x_i^1, x_i^2)$. In G-SSL methods like MAE~\citep{he2022masked}, $x_i$ is divided into patches, masked, and reassembled into $x_i^1$, while the original sample serves as $x_i^2$. Thus, the augmented training set becomes $\mathcal{X}_{tr}^{aug} = \{(x_i^1, x_i^2)\}_{i=1}^N$, and SSL aims to learn a feature extractor $f$ from these pairs.

D-SSL objectives typically consist of alignment (maximizing similarity within pairs) and regularization (e.g., enforcing uniformity~\citep{wang2020understanding}). G-SSL achieves alignment by reconstructing $x_i^2$ from $x_i^1$ using an encoder-decoder structure. In both cases, one sample in the pair serves as an \emph{anchor}, guiding the other to align with it. For D-SSL, this is explicit; for G-SSL, $x_i^2$ acts as the anchor for reconstructing $x_i^1$. This process can be interpreted as assigning one sample as a learning target and adjusting the other to match it in feature space. As a result, paired samples become tightly clustered, with the anchor acting like a cluster center. Consequently, $\mathcal{X}_{tr}^{aug}$ can be viewed as a mini-batch classification task with $N$ implicit classes, where alignment functions similarly to class-specific feature learning.

\subsection{Comparsion of SSL and Meta-Learning}
\label{sec:meta-learning}
During the meta-training phase of meta-learning, the model is trained on a collection of tasks sampled from a task distribution. Each training mini-batch consists of multiple tasks $\{ \tau_1, \tau_2, \cdots, \tau_B \}$, where each task $\tau_i$ contains a support set $S_i$ and a query set $Q_i$. The support set is used to adapt the model to the specific task, while the query set is used to evaluate and update the meta-learner’s parameters based on the adaptation performance. The learning mechanism of meta-learning typically follows a bi-level optimization paradigm~\cite{gordon2018meta,jiang2022role}: the inner loop adapts task-specific parameters using the support set, and the outer loop updates the shared meta-parameters using the query set loss across all tasks in the mini-batch. This enables the model to acquire generalizable knowledge that can be quickly adapted to unseen tasks during meta-testing.

Comparing SSL and meta-learning, we can obtain SSL and meta-learning share certain similarities in task construction but differ significantly in mini-batch construction and learning mechanism \cite{ni2021close}. Based on the above analysis, we can obtain the collection of all instance pairs across the entire training set can be viewed as forming a multi-class classification task. Similar to traditional machine learning, the training process of SSL can be viewed as a mini-batch-based learning procedure. The basic unit of its training set is a sample pair, and multiple distinct pairs form a mini-batch. The training process essentially aims to approximate the conditional distribution $p(y|x)$. Notably, the learned distribution $p(y|x_{\text{train}})$ can generalize to $p(y|x_{\text{test}})$ only under the assumption that training and test data are i.i.d.. However, in OOD or transfer tasks, this i.i.d. assumption often fails, which poses a challenge to the performance of SSL in such scenarios. In contrast, meta-learning adopts a different structural unit: each basic unit in the training set corresponds to an entire task, and a mini-batch consists of multiple such tasks. During meta-training, the model learns to approximate the same $p(y|x)$ across all samples within any given task. Since both training and test tasks are assumed to be sampled from the same (possibly unknown) task distribution, the i.i.d. assumption naturally holds at the task level. As a result, the function learned during meta-training can be effectively transferred to test tasks, making meta-learning more robust and reliable when facing OOD problem or transfer learning scenarios.

\subsection{Empirical Evidence}

One promising way to improve the transferability of SSL representations is to incorporate task-level information inspired by meta-learning. Specifically, we uniformly split each training batch $\mathcal{X}_{tr}^{aug}$ into two subsets \(B_1\) and \(B_2\), which serve as two distinct tasks. Each task is further divided into a support set \(B_i^s\) and query set \(B_i^q\), where \(i \in \{1, 2\}\), following the meta-learning framework.
We adopt SimCLR as the baseline, training on ImageNet-100 and testing on multiple datasets. Here we just report results on CIFAR-100(Details are provided in the Appendix). First, we evaluate the Top-1 accuracy without task construction, denoted as \(acc(\mathrm{SimCLR})\). Then, using the task-based multi-task learning mechanism, we conduct five independent runs, with the \(i\)-th result denoted as \(acc(\mathrm{SimCLR+T}, i)\). The corresponding performance gain is \(\Delta_i = acc(\mathrm{SimCLR+T}, i) - acc(\mathrm{SimCLR})\). Results in Figure~\ref{fig:motivation_comparison} show that task-level design improves performance, but with noticeable instability—performance differences can reach nearly 3\%, and negative gains are observed in some runs.

To investigate this instability, we analyze the first 100 training epochs of the best and worst performing runs. We compute the cosine similarity of gradients between tasks at each epoch. A negative value indicates conflicting gradients (angle $>90^\circ$), while a positive value indicates alignment. As shown in Figure~\ref{fig:motivation_subfig1} and Figure~\ref{fig:motivation_subfig2}, we observe that: (i) gradient similarity fluctuates during training, often exhibiting conflict; (ii) lower inter-task gradient conflict correlates with better transfer performance. These results suggest that while task-level information enhances SSL transferability, it introduces inter-task gradient conflict, which can negatively affect consistency and stability in performance.

\subsection{Causal Analysis and Motivation}
\label{task conflict}

\textbf{Causal Analysis.} To investigate the underlying reasons for the above phenomenon, we propose using causal theory for analysis. We construct a Structural Causal Model (SCM) based on the causal generating mechanism~\cite{suter2019robustly,hu2022improving}, as shown in Figure \ref{fig:intro_SCM}(a). Specifically, the SCM comprises two tasks, \(\tau_i\) and \(\tau_j\), where \(Y_i\) and \(Y_j\) denote the label variables for tasks \(\tau_i\) and \(\tau_j\), and \(X_i\) and \(X_j\) represent the generated samples corresponding to these tasks. Meanwhile, \(F^u_i\) and \(F^u_j\) denote the sets of unique causal factors specific to tasks \(\tau_i\) and \(\tau_j\), respectively, and \(F_{i,j}^s\) contains the shared causal generative factors.
Following \cite{sun2024m2i2,deshpande2022deep}, the samples are generated with all the causal generative factors, e.g., for \(\tau_i\), the sample \(X_i\) is generated by both the task-specific factors \(F^u_i\) and the shared factors \(F_{i,j}^s\). Since each causal generative factor in \(F^u_i\), \(F^u_j\), and \(F_{i,j}^s\) corresponds to a specific semantic attribute (e.g., color, shape) in its respective task, they can be viewed as being determined by the label variable \(Y_i\) (or \(Y_j\)) of the tasks.
Thus, we obtain Figure~\ref{fig:scm_1}, and for task \(\tau_i\), we refer to \(F_{i,j}^s\) and \(F^u_i\) as the causal generating factors that are causally related to \(Y_i\), while \(F^u_j\) is considered non-causal factors.

\begin{figure}[t]
    \centering
    \begin{subfigure}[b]{0.36\linewidth}
         \includegraphics[width=\linewidth]{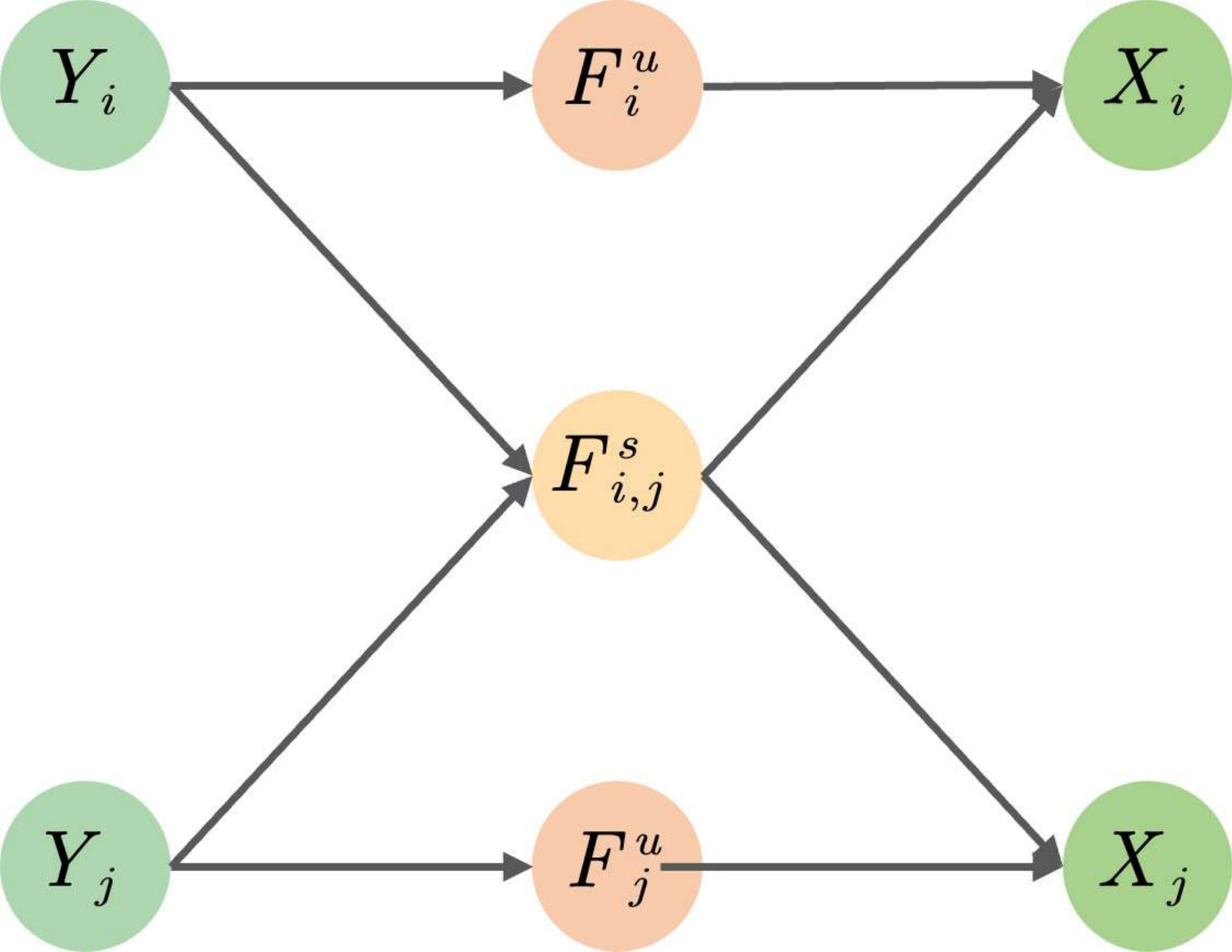}
         \caption{}
         \label{fig:scm_1}
    \end{subfigure}
    \hfill
    \begin{subfigure}[b]{0.36\linewidth}
         \includegraphics[width=\linewidth]{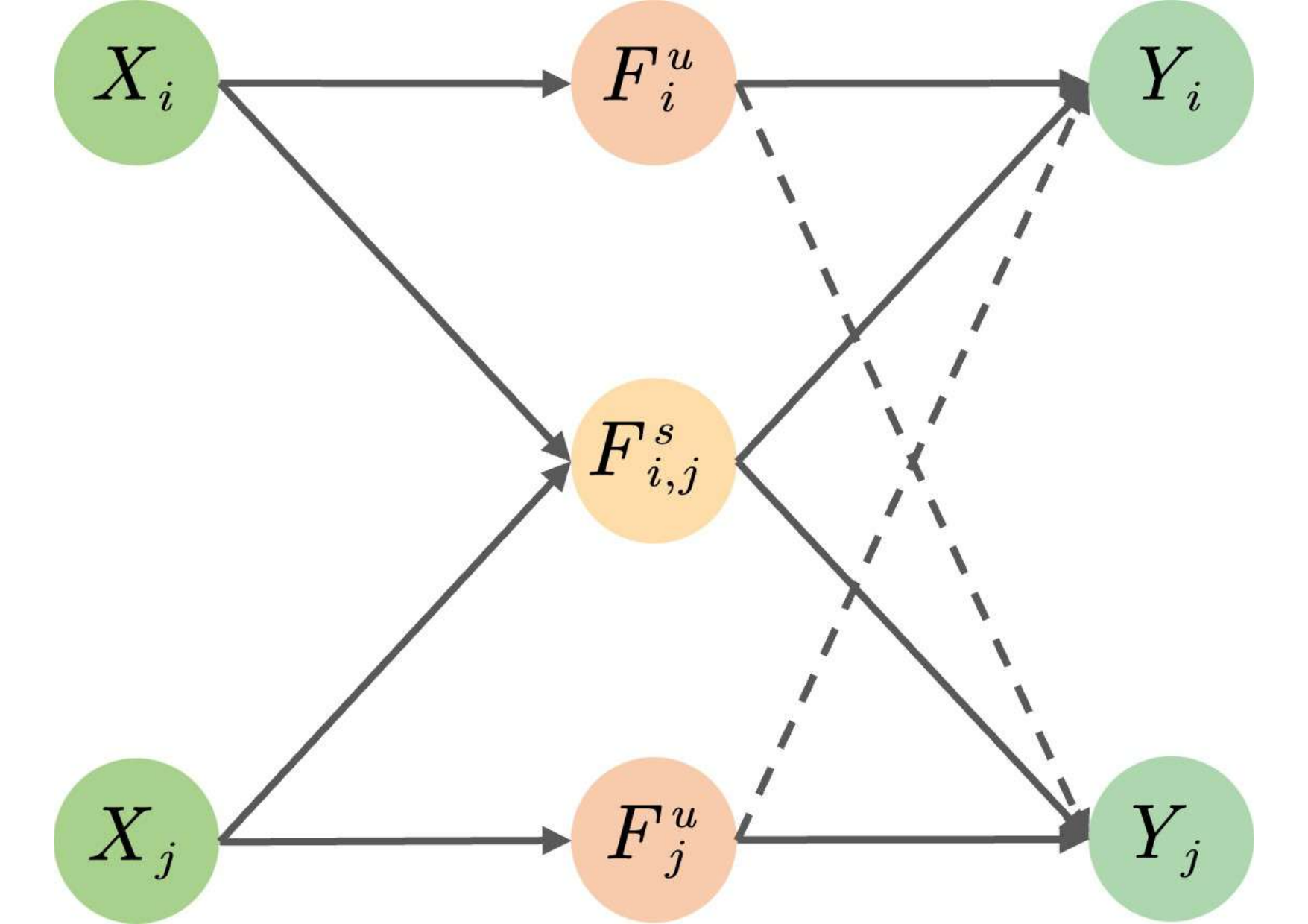}
         \caption{}
         \label{fig:scm_2}
    \end{subfigure}
    \caption{(a) Overview of the SCM based on generation mechanisms; (b) The causal mechanism of task conflict.}
    \label{fig:intro_SCM}
\end{figure}

Based on the above SCM, a good SSL model should utilize only the causal factors to learn the tasks, thereby achieving accurate decisions even in the absence of labels. However, to achieve modeling transferability, a multi-task joint learning mechanism is required, which causes the model to learn all the causal factors across tasks, including \(F^u_i\), \(F^u_j\), and \(F_{ij}^s\). Consequently, for a specific task \(\tau_i\), the model may learn from non-causal factors originating from other tasks, which poses a significant challenge to attaining optimal predictions. To validate this assertion, we consider a scenario involving two tasks in the same SSL training batch. Here, we treat the pseudo-labels generated by data augmentation as \(Y_i\) and \(Y_j\).
Then, we have:
\begin{theorem} \label{thm1}
If the correlation between $Y_i$ and $Y_j$ is not equal to 0.5, then the optimal model for task $i$ has a non-zero weight on $F^u_j$. If the correlation is equal to 0.5 with limited training samples, then the optimal classifier for task $i$ also has non-zero weight on factor $F^u_j$.
\end{theorem}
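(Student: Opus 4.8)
The plan is to reduce the claim to an explicit second-moment computation in a linear model over the jointly-extracted factors. First I would fix the setting suggested by the SCM of Figure~\ref{fig:intro_SCM}: let $Y_i,Y_j$ be the augmentation-induced pseudo-labels, symmetric on $\{-1,+1\}$, and set $\rho:=\mathrm{Corr}(Y_i,Y_j)$ (so the theorem's ``$0.5$'' is the parametrization of $\rho=0$). Model the factors produced by the shared extractor as noisy linear images of the labels, e.g. $F^u_i=\alpha Y_i+\varepsilon_i$, $F^u_j=\alpha Y_j+\varepsilon_j$, $F^s_{i,j}=\beta Y_i+\beta' Y_j+\varepsilon_s$, with the $\varepsilon$'s zero-mean, positive-variance, and independent of the labels and of one another. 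I would take ``the optimal model for task $i$'' to be the population minimizer $w^\star=\arg\min_w\,\mathbb{E}(Y_i-w^\top F)^2$ over $F=(F^s_{i,j},F^u_i,F^u_j)$ (equivalently, up to scaling, the Bayes/LDA linear discriminant in the Gaussian case; the mechanism is identical for logistic loss). The target is then: the $F^u_j$-coordinate $w^\star_{\,j}$ is nonzero, i.e. the task-$i$ decision genuinely depends on a factor non-causal for $Y_i$ — precisely the ``task conflict''/confounding claimed in the text.

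\textbf{Part 1 ($\rho\neq0$).} Using $w^\star=\Sigma^{-1}c$ with $\Sigma=\mathrm{Cov}(F)$, $c=\mathrm{Cov}(F,Y_i)$, the claim becomes a non-cancellation check. I would compute $c$ directly: $\mathrm{Cov}(F^u_j,Y_i)=\alpha\,\mathrm{Cov}(Y_j,Y_i)=\alpha\rho\neq0$, so $c$ has a nonzero $F^u_j$-entry, and the only route carrying $Y_i$-information into $F^u_j$ is through $Y_j$, which is \emph{not} screened off by $F^s_{i,j}$ and $F^u_i$ (those carry their own independent noise). Equivalently, $w^\star_{\,j}=0$ would force $F^u_j$ to be uncorrelated with the residual of $Y_i$ after regressing on $(F^s_{i,j},F^u_i)$; but $F^u_j$ retains a component of $Y_j$ orthogonal to that residual whenever $\rho\neq0$, a contradiction. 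To make this airtight I would just invert the $3\times3$ matrix $\Sigma$ explicitly (its off-diagonal entries are all proportional to $\rho$ and the coupling coefficients, its diagonal involves the positive noise variances) and read off $w^\star_{\,j}$ as $\alpha\rho$ over a strictly-positive Schur-complement quantity.

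\textbf{Part 2 ($\rho=0$, limited samples).} Now the population residual argument gives $w^\star_{\,j}=0$, so I would rerun the identical computation with the empirical measure $\widehat{\mathbb{P}}_n$ over the $n$ observed pairs. The empirical correlation $\widehat\rho_n=\tfrac1n\sum_k Y_i^{(k)}Y_j^{(k)}$ is an average of $\pm1$'s, hence $=0$ only when $n$ is even and the signs balance exactly — an event of probability strictly below $1$ (and in fact $O(n^{-1/2})$). On its complement $\widehat\rho_n\neq0$, Part~1 applied verbatim to $\widehat{\mathbb{P}}_n$ yields a nonzero empirical-optimal weight on $F^u_j$, which is exactly the theorem's ``limited training samples'' clause.

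\textbf{Main obstacle.} The one genuinely delicate point is the non-cancellation in Part~1: a priori $\Sigma^{-1}$ could annihilate the $F^u_j$ coordinate even though $c_j\neq0$. I would neutralize this either by the explicit $3\times3$ inversion, or — more cleanly and more robustly to the precise generative model — by the partial-correlation characterization above together with a genericity assumption ($\alpha\neq0$, all noise variances positive). A secondary care point is pinning down ``optimal model'' and ``weight on $F^u_j$'' precisely enough that the least-squares computation transfers to the classifier statement (a nonzero coefficient being equivalent to the decision rule depending on $F^u_j$); beyond that, everything is routine second-moment algebra.
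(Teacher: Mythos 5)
Your proposal is correct in substance, but it follows a genuinely different route from the paper. The paper's proof works directly with the Bayes-optimal classifier: it models $F^u_i\sim\mathcal{N}(Y_i\mu_i,\sigma_i^2 I)$, $F^u_j\sim\mathcal{N}(Y_j\mu_j,\sigma_j^2 I)$, sets $c_{ij}=P(Y_i=Y_j)$ (your $\rho=0$ is exactly its $c_{ij}=0.5$), and expands $P(Y_i,F^u_i,F^u_j)=P(Y_i)P(F^u_i|Y_i)\sum_{Y_j}P(F^u_j|Y_j)P(Y_j|Y_i)$ into a two-term mixture $c_{ij}e^{Y_i\gamma_1}+(1-c_{ij})e^{Y_i\gamma_2}$ with $\gamma_{1,2}=\frac{\mu_i}{\sigma_i^2}F^u_i\pm\frac{\mu_j}{\sigma_j^2}F^u_j$; the $F^u_j$-dependence cancels exactly when $c_{ij}=0.5$ and survives otherwise, which makes the dichotomy transparent without any matrix inversion, genericity assumption, or worry about your ``non-cancellation'' obstacle (note also the paper drops the shared factor $F^s_{i,j}$ entirely, assuming disjoint task-specific factors, whereas your model carries it and therefore needs the genericity caveat you flag). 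Your route instead computes the population least-squares/LDA weight $w^\star=\Sigma^{-1}c$ and argues via partial covariance; this literally produces a ``weight on $F^u_j$'' (arguably closer to the theorem's wording than the paper's nonlinear posterior), and more importantly your Part 2 — rerunning the argument under the empirical measure, where $\widehat\rho_n\neq 0$ generically — actually addresses the ``limited training samples'' clause, which the paper's written proof does not treat at all (it only shows the population classifier ignores $F^u_j$ when $c_{ij}=0.5$). Two small cautions if you write this up: with the shared factor present, the coefficient on $F^u_j$ can vanish on a measure-zero set of parameters, so either drop $F^s_{i,j}$ as the paper does or state the genericity condition explicitly; and in Part 2 the conclusion holds with probability $1-O(n^{-1/2})$ (or almost surely if you also use the nonvanishing empirical noise--label covariances), not deterministically, and you should note that the empirical $\widehat\Sigma$ must be nonsingular for ``the'' optimal weight to be well defined.
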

It shows that the learned SSL model leverages causal factors from other tasks to facilitate the learning of the target task (with proofs in Appendix). For example, in task \(\tau_i\), the model employs causal factors \(F^u_j\) from task \(\tau_j\) to learn \(Y_i\), creating a spurious correlation represented as \(F^u_j \to Y_i\); similarly, \(F^u_i \to Y_j\) exists for task \(\tau_j\). Therefore, the SCM for the learning phase, shown in Figure~\ref{fig:scm_2}, which contains two spurious paths.
These spurious correlations lead to suboptimal learning for each task. For any target task, interference from other tasks during training disrupts learning. A direct indicator of this is when the gradients between tasks have angles greater than \(90^\circ\) (Figure~\ref{fig:motivation_dimension}), meaning that updates from other tasks suppress the gradient updates of the target task—a phenomenon we refer to as ``task conflict''.

\textbf{Motivation.} 
Based on the above causal analysis, enhancing SSL feature transferability requires not only constraining the learning mechanism as in meta-learning but also addressing task conflict during training. Generative factors can be viewed as semantic vectors~\cite{pearl2009causality, wolff2019vector,scholkopf2021toward}, and sample features as projections onto them. Task conflict arises when the multi-task learning process forces the model to learn all task-related semantic vectors. As a result, a target task's feature representation may contain non-zero projections onto irrelevant semantics from other tasks, degrading its effectiveness.
To resolve this, we propose identifying the semantic vectors relevant to the target task and retaining only the corresponding feature projections. This approach effectively eliminates the interference of unrelated semantics, thereby mitigating task conflict and improving feature transferability.

\section{Methodology}
\label{Sec:Method}

\subsection{Task Construction}
\label{Sec:TC}
In this subsection, we describe how to construct SSL tasks, as shown in Figure \ref{fig:pipeline}. We begin by randomly sampling a mini-batch $X_{tr}=\{x_i \}_{i=1}^N$ from the training set. 

Drawing inspiration from meta-learning \cite{vilalta2002perspective, finn2017model}, we divide the mini-batch $X_{tr}$ into $K$ groups, denoted as $B_1,...,B_k,...B_K$, with each group treated as an individual task. This grouping strategy allows the model to simulate diverse tasks within a single batch, promoting the learning of task-agnostic representations. Such representations are crucial for improving the model’s transferability to unseen domains or tasks.
For D-SSL, we randomly apply $m$ data augmentation strategies to each group to generate the augmented dataset $B_k^{aug}$, containing $m$ augmented views for each original sample. Each task consists of a support set and a query set. The support set is constructed by randomly selecting $m'$ augmented views from each sample, while the remaining views are assigned to the query set. The support and query sets are then used to update the task-specific model parameters and guide the learning process for each task.
For G-SSL, each sample is first divided into multiple patches. Then, $m$ different random masks are applied to produce $m$ masked views. Each task consists of a support set and a query set. The support set is constructed by randomly selecting $m'$ masked views for each sample, along with the original (unmasked) sample. The remaining masked views are used to construct the query set. The model is trained to reconstruct the original input from the masked views by minimizing the reconstruction loss. 
By constructing multiple tasks, the SSL model can learn both the sample-level and task-level information, improving its transferability. However, the inevitability of task conflicts may affect the transferability of the model (Figure \ref{fig:motivation_dimension}), and we will address this issue in the next subsection.

\begin{figure}
    \centering
    \includegraphics[width=0.88\linewidth]{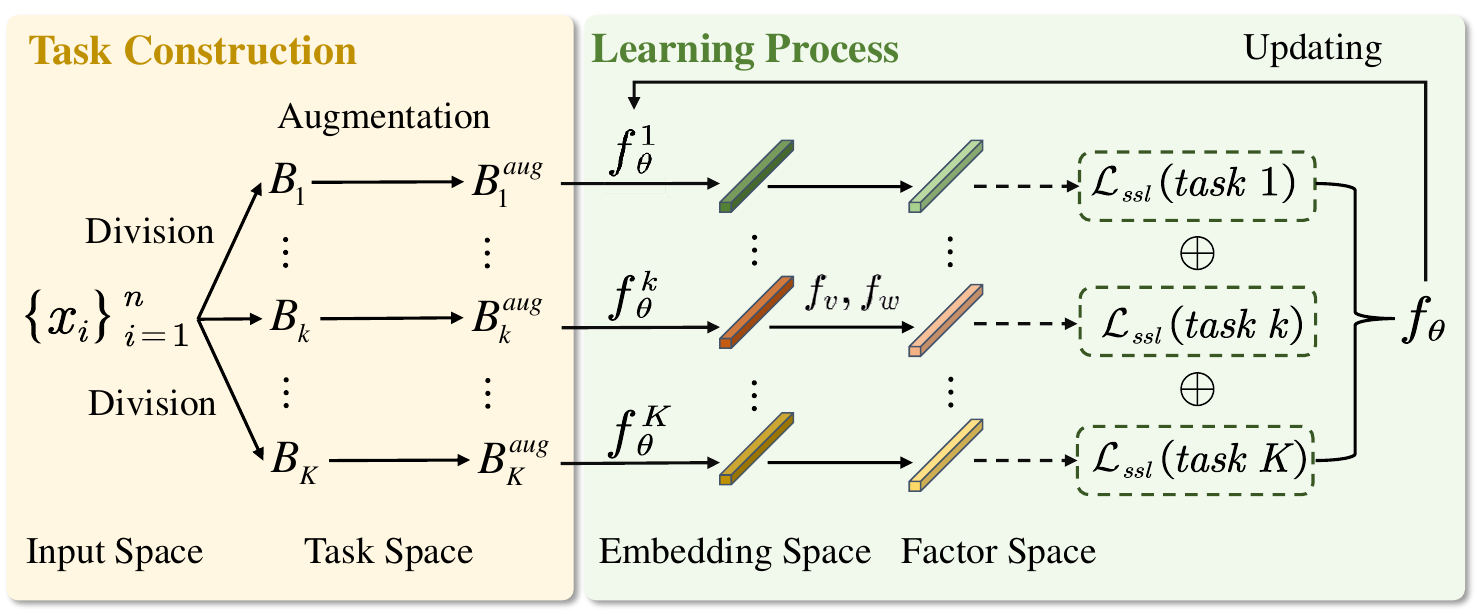}
    \caption{Overview of the proposed framework.}
    \label{fig:pipeline}
\end{figure}

\subsection{Task Conflict Calibration}
\label{Sec:TF}

In this subsection, we propose a method for calibrating  task conflict. Specifically, we address the following three key questions: (i) how to learn the union of causal generative factors across all tasks; (ii) how to identify the task-specific causal factors for individual tasks; and (iii) how to alleviate the conflicts that may arise between tasks. 

For question (i), we first compute the mean vector and covariance matrix of all sample representations in the training batch. The mean vector represents the center of the sample features, while the covariance matrix captures the relationships between the features. We then concatenate these two statistical quantities to form a new matrix $S \in R^{(d+1) \times d}$, for the subsequent factor extraction process. Next, we fed the concatenated matrix into a factor extraction network $f_v$. The output of $f_v$ is constrained to be a factor matrix $V = f_v(S) \in \mathbb{R}^{d_v \times n_f}$, where each column represents the basis of a factor, $d_v$ is the dimension of the generative factor (which is the same as the dimension of the sample representation, denoted as $d$), and $n_f$ is the number of factors. At last, the loss function of learning $f_v$ can be presented as:
\begin{equation} \label{eq:all_factor}
\mathcal{L}_{v}(f_v) =  \left \| V^{\rm T}V - I \right \| _{F}^{2} + \left \| Z_{tr}^{aug} - Z_{tr}^{aug}VV^{\rm T} \right \| _{F}^{2}.
\end{equation}
where $Z_{tr}^{aug}$ represent the augmented dateset on the whole training batch, $I$ is a identity matrix, $\left \| \cdot \right \| _{F}$ is the $F$-paradigm of the matrix. The first term is to constrain the orthogonality of the column vectors of $V$, and the second term is to constrain $V$ to best reconstruct the training dataset.

Since the SSL task in this work is constructed by partitioning the entire training batch, and the full set encompasses all tasks, it is reasonable to base the learning on the entire training batch. This approach allows the model to capture the global characteristics of the data, facilitating generalization and transfer across tasks. 
Specifically, the mean vector and covariance matrix serve as sufficient statistics, summarizing the main features of the data and reflecting its overall distribution. By using these statistics to extract factors, the model learns shared representations that can adapt to different tasks, thereby improving transferability. As different factors represent distinct semantics, enforcing orthogonality among their corresponding vectors serves as a feasible approach to learning $f_v$. Minimizing the reconstruction error ensures that the factor matrix accurately reconstructs the data, allowing the shared factors across tasks to be effective and adaptable, further supporting transfer learning.

For question (ii), given a sample of a task, we input it into a leanrnable weight extraction network $f_w$. We constrain that the output of $f_w$ is a weight vector with $n_f$ dimension. Then, the loss function of learning $f_w$ is presented:
\begin{equation} \label{eq:task_factor}
    \mathcal{L}_{w}(f_w) = \sum_{k}\sum_{z_j \in {\rm task}\, k} \left \| z_j - W_jV^T \right \|_{2}^{2} + \left \| {W_j} \right \|_{1}.
\end{equation}
Here, $V$ is a pre-given matrix, $W_j = f_w(z_j)V^T$ represents the weight vector of the \( j \)-th sample under task \( k \), $z_j$ is the representation of a sample. 
The first term ensures that each sample representation $z_j$ is reconstructed by a weighted combination of a few shared factors in $V$, with $W_j$ serving as its sample-specific weight vector. This encourages the model to capture task-specific structures through selective factor usage.
The second term applies an $l_1$-norm regularization to $W_j$, promoting sparsity by activating only a limited number of factors. This improves interpretability and reduces overfitting.Together, these terms guide $f_w$ to assign sparse, task-aware weights over shared factors, enabling the identification of task-specific causal components.

The essence of task conflict lies in the causal generative factors of the target task being confounded by factors from other tasks. A feasible solution is to constrain the features of samples from the target task to contain only semantics that are relevant to the target task itself. From the perspective of feature representation, this can be interpreted as enforcing that the features of a sample are generated solely by the causal generative factors associated with the target task. Therefore, Bases on the solutions to question (i) and (ii), the solution to question (iii) for any sample $z_j$ in the target task can be directly formulated as: $f_w(z_j)V^T$.
Finally, we get the task conflict calibration loss by combining Eq.\ref{eq:all_factor} and Eq.\ref{eq:task_factor}:
\begin{equation}
\label{eq:ldf}
    \mathcal{L}_{tc^2}(f_v, f_w) =  \mathcal{L}_{v} + \lambda_s \mathcal{L}_{w},
\end{equation}
where $\lambda_s$ is the parameter used for balancing two terms.

\subsection{Learning Process}
\label{APP_F&A}

The training process with TC$^2$ in each batch is performed via two stages. In this first stage, we fix the SSL model and optimize the transfer function $f_v$ and weight function $f_w$ via bi-level optimization. In the second stage, we optimize the SSL model with $f_v$ and $f_w$ being held.

In the first stage, we optimize $f_v$ and $f_w$ to ensure the causality of the learned generative factors. Following~\cite{zhu2021understanding,deng2022strong}, generative factors should remain invariant across different sample distributions within a task. An effective model captures these factors and adapts across tasks. Support and query sets from the same task represent shifted distributions, and the meta-learning objective is to learn invariants effective across both.

Thus, we adopt bi-level optimization to learn $f_v$ and $f_w$, enforcing causal invariance. First, we first update $f_v$ and $f_w$ on the support sets of all tasks by the following formula:
\begin{equation}
\label{eq:wv_inner}
\begin{array}{c}
f_v'\leftarrow f_v-\alpha_1\nabla_{f_v}\bar{\mathcal{L}}^{s}, \quad f_w'\leftarrow f_w-\alpha_2\nabla_{f_w}\bar{\mathcal{L}}^{s} \\
s.t. \bar{\mathcal{L}}^{s} = \frac{1}{K}\sum\limits_{k=1}^{K} \mathcal{L}_{ssl}^{s}(f_w, f_v, B^{aug}_k) + \mathcal{L}_{tc^2}(f_v, f_w)
\end{array}
\end{equation}
Here, $\mathcal{L}_{ssl}^{s}(\cdot)$ is the self-supervised loss on support set $S_k$ of task $k$, and $\alpha_1$, $\alpha_2$ are learning rates. Note that losses are computed in the factor space.

Next, $f_v$ and $f_w$ are updated on query sets:
\begin{equation}
\label{eq:wv_outer}
\begin{array}{c}
f_v \leftarrow f_v' - \alpha_3\nabla_{f_v'}\bar{\mathcal{L}}^{q}, \quad f_w \leftarrow f_w' - \alpha_4\nabla_{f_w'}\bar{\mathcal{L}}^{q} \\
s.t. \bar{\mathcal{L}}^{q} = \frac{1}{K}\sum\limits_{k=1}^{K} \mathcal{L}_{sse}^{q}(f_w', f_v', B^{aug}_k) + \mathcal{L}_{tc^2}(f_v', f_w')
\end{array}
\end{equation}
$\mathcal{L}_{sse}^{q}(\cdot)$ denotes the sum of squared errors on query set $Q_k$, with learning rates $\alpha_3$, $\alpha_4$.

In the second stage, with $f_v$ and $f_w$ fixed, we update $f_\theta$ via bi-level optimization to explicitly encode transferability. We first fine-tune $f_\theta$ for each task using its support set:
\begin{equation}
\label{eq:f_k}
f_\theta^k \gets f_\theta - \beta_1 \nabla_{f_\theta} \mathcal{L}_{ssl}^{s}(B^{aug}_k, f_\theta),
\end{equation}
where $\beta_1$ is the learning rate. Then, we update $f_\theta$ by minimizing the total query loss across tasks:
\begin{equation}
\label{eq:f}
f_\theta \gets f_\theta - \beta_2 \nabla_{f_\theta} \sum_{k=1}^{K} \mathcal{L}_{sse}^{q}(B^{aug}_k, f_\theta^k),
\end{equation}
with $\beta_2$ as the learning rate.

This two-stage bi-level optimization mitigates task conflicts that hinder learning, and enables multi-task modeling in SSL to enhance representation transferability.

\begin{table*}[t]
    \centering
    \setlength{\tabcolsep}{3 pt}{
    \small
    \begin{tabular}{lccccccccc}
    \toprule
    \multirow{2.5}{*}{Method} &\multicolumn{2}{c}{SOT} &\multicolumn{1}{c}{VOS}&\multicolumn{2}{c}{MOT}&\multicolumn{2}{c}{MOTS} &\multicolumn{1}{c}{PoseTrack}\\
    \cmidrule(lr){2-3} \cmidrule(lr){4-4} \cmidrule(lr){5-6} \cmidrule(lr){7-8} \cmidrule(lr){9-9}
    ~ & AUC$_{\rm{XCorr}}$ & AUC$_{\rm{DCF}}$ & J-mean & IDF1 & HOTA & IDF1 & HOTA & IDF1 \\
    \midrule
    SimCLR & 47.3 / 51.9 & 61.3 / 50.7 & 60.5 / 56.5 & 66.9 / 75.6 & 57.7 / 63.2 & 65.8 / 67.6 & 67.7 / 69.5 & 72.3 / 73.5  \\
    MoCo &  50.9 / 47.9 & 62.2 / 53.7 & 61.5 / 57.9 & 69.2 / 74.1 & 59.4 / 61.9 & 70.6 / 69.3 & 71.6 / 70.9 & 72.8 / 73.9 \\
    SwAV & 49.2 / 52.4 & 61.5 / 59.4 & 59.4 / 57.0 & 65.6 / 74.4 & 56.9 / 62.3 & 68.8 / 67.0 & 69.9 /69.5 & 72.7 / 73.6 \\
    BYOL &  48.3 / 55.5 & 58.9 / 56.8 & 58.8 / 54.3 & 65.3 / 74.9 & 56.8 / 62.9 & 70.1 / 66.8 & 70.8 / 69.3 & 72.4 / 73.8 \\
    Barlow Twins &  44.5 / 55.5 &  60.5 / \textbf{60.1} &  61.7 / 57.8 &  63.7 / 74.5 &  55.4 / 62.4 &  68.7 / 67.4 &  69.5 / 69.8 &  72.3 / 74.3 \\
    \midrule
    SimCLR+TC$^2$ & {51.3} / 54.2 & \textbf{63.8} / 54.2 & \textbf{63.4} / \textbf{59.3} & \textbf{70.8} / \textbf{78.3} & \textbf{62.9} / \textbf{64.5} & 67.4 / \textbf{70.5} & 68.9 / 71.1 & 73.2 / 74.3 \\
    BYOL+TC$^2$ &  \textbf{51.6} / \textbf{58.1} & 59.9 / 58.9 & 61.7 / 56.9 & 67.3 / 76.6 & 57.9 / 63.8 & \textbf{72.8} / 68.1 & \textbf{73.8} / \textbf{72.3} & \textbf{75.6} / \textbf{76.2} \\
    \bottomrule
    \end{tabular}
    }
    \caption{Transfer learning on video tracking tasks. All methods use the ResNet-50 backbone with the best results in bold.}
    \label{tab:trans_video}
\end{table*}

\section{Experiments}
\label{Exp}

In this section, we conduct extensive experiments on benchmark datasets to evaluate the effectiveness of the proposed TC$^2$. More details and results are provided in the Appendix.

\subsection{Transfer Learning}
\label{Sec:Transfer}
In this section, we study the transferability of our proposed method under different transfer tasks, covering 7 benchmark datasets. Note that we also validate the effectiveness of our proposed method for unsupervised learning, semi-supervised learning and generative tasks in the Appendix.

\textbf{Video-based Task.} We evaluate TC$^2$ on video tasks by transferring the pre-trained model to the UniTrack benchmark~\cite{wang2021different}. Table~\ref{tab:trans_video} reports the results on five tasks using features from [layer3/layer4] of ResNet-50. The results show that our proposed method has improvements compared with existing self-supervised methods. SimCLR increase about 3\% in the VOS~\cite{perazzi2016benchmark}, and BYOL increase over 2\% in the MOT~\cite{milan2016mot16}.

\textbf{Object Detection and Instance Segmentation.} We evaluate our method on VOC 07 ~\cite{everingham2010pascal} and COCO~\cite{lin2014microsoft} for object detection and instance segmentation tasks with the most commonly used protocol~\cite{zbontar2021barlow}. Several different self-supervised methods are used for comparison.
We report the comparison results before and after introducing our proposed method in Table~\ref{tab:trans_odis}, showing that TC$^2$ brings stable performance improvements on different transfer tasks. For example, BYOL+TC$^2$ achieves SOTA on the VOC 07 detection task, and VICReg+TC$^2$ achieves SOTA on the COCO instance segmentation task.

\begin{table}
\centering
    \setlength{\tabcolsep}{0.6 pt}{
    \small
    \begin{tabular}{lccccccccc}
    \toprule
    \multirow{2.5}{*}{Method} &\multicolumn{3}{c}{VOC 07 det} &\multicolumn{3}{c}{COCO det}&\multicolumn{3}{c}{COCO seg}\\
    \cmidrule(lr){2-4} \cmidrule(lr){5-7} \cmidrule(lr){8-10}  
	& AP$_{50}$ & AP & AP$_{75}$ & AP$_{50}$ & AP & AP$_{75}$ & AP$^{m}_{50}$ & AP$^{m}$ & AP$^{m}_{75}$\\
    \midrule
     Supervised & 74.4  & 42.4 & 42.7 & 58.2 & 38.2 & 41.2 & 54.7 & 33.3 & 35.2\\
    \midrule
    SimCLR & 75.9 & 46.8 & 50.1 & 57.7 & 37.9 & 40.9 & 54.6 & 33.3 & 35.3\\
    MoCo & 77.1 & 46.8 & 52.5 & 58.9 & 39.3 & 42.5 & 55.8 & 34.4 & 36.5\\
    BYOL & 77.1 & 47.0 & 49.9 & 57.8 & 37.9 & 40.9 & 54.3 & 33.2 & 35.0\\
    MAE & 77.6 & 48.5 & 51.2 & 59.8 & 38.2 & 41.6 & 56.7 & 33.9 & 37.1 \\ 
    SimSiam & 77.3 & 48.5 & 52.5 & 59.3 & 39.2 & 42.1 & 56.0 & 34.4 & 36.7\\
    Barlow Twins & 75.7 & 47.2 & 50.3  & 59.0 & 39.2 & 42.5 & 56.0 & 34.3 & 36.5\\
    SwAV & 75.5 & 46.5 & 49.6 & 58.6 & 38.4 & 41.3 & 55.2 & 33.8 & 35.9\\
    VICRegL & 75.9 & 47.4 & 52.3 & 59.2 & 39.8 & 42.1 & 56.5 & 35.1 & 36.8\\    
     \midrule
    SimCLR+TC$^2$  & 78.3 & 49.3 & 52.3 & 59.6 & 40.7 & 43.7 & 56.8 & 36.5 & 37.2 \\
    BYOL+TC$^2$  & 79.7 & \textbf{50.7} & 52.5 & 59.7 & 39.9 & 44.1 & 56.3 & 35.3 & 38.3 \\
    MAE+TC$^2$ & \textbf{81.2} & 50.6 & 53.4 & 62.3 & 40.5 & 44.7 & \textbf{59.2} & 35.1 & \textbf{40.0} \\ 
    VICRegL+TC$^2$  & 78.2 & 49.5 & \textbf{54.1} & \textbf{62.7} & \textbf{42.6} & \textbf{45.3} & 58.9 & \textbf{37.9} & 39.7 \\
    \bottomrule
\end{tabular}}
\caption{Transfer learning on objection detection and instance segmentation. ``AP'' is the average precision, ``$\text{AP}_{N}$'' represents the average precision when the IoU (Intersection and Union Ratio) threshold is $N\%$.}
\label{tab:trans_odis}
\end{table}

\textbf{OOD Tasks.} We also assess our method on benchmark datasets that address the OOD challenge, i.e., PACS \cite{xu2019self}, ColoredMNIST \cite{nam2020learning}, and OfficeHome \cite{venkateswara2017deep}. 
Specifically, we assess the performance of SSL baselines both before and after incorporating the proposed TC$^2$ across three datasets. For PACS, we train the SSL models on these domains (Photo, Sketch, and Cartoon) and test them on all four domains (Photo, Art, Cartoon, and Sketch), with the average performance also reported. For ColoredMNIST, we adopt the experimental setup from \cite{gat2020removing}, evaluating the ability to generalize to new classes after training on base classes. Lastly, for OfficeHome, we randomly select one domain as the source for training and another as the target domain. Labels for the source domain are predefined, whereas those for the target domain remain unknown. We then compare the performance of SimCLR with and without the introduction of TC$^2$.
As shown in Table \ref{tab:app_ood_pacs}, Table \ref{tab:app_ood_minst} and Table \ref{tab:app_ood_office}, TC$^2$ significantly enhances performance, thereby confirming its effectiveness on OOD tasks.

\begin{table}[t]
\centering
\setlength{\tabcolsep}{3 pt}{
\small
\begin{tabular}{l|c|c|c|c|c}
\toprule
Method   & Photo & Sketch & Cartoon & Art & Average \\
\hline
SimCLR      & 86.4 & 85.1 & 87.2  & 74.3 & 83.3    \\
SimCLR+TC$^2$ & 88.8 & 89.2 & 88.5 & 78.3 & 86.2 \\
BYOL        & 83.9  & 84.6   & 82.7 & 64.5 & 78.9    \\
BYOL+TC$^2$   & 85.6 & 87.2 & 85.0 & 69.9 & 81.9 \\
Barlow Twins & 83.1 & 82.7 & 83.4 & 64.8 & 78.5 \\
Barlow Twins+TC$^2$ & 86.5 & 84.6 & 86.7 & 67.4 & 81.3 \\
MAE & 86.9 & 87.2 & 87.6 & 75.4 & 84.3 \\
MAE+TC$^2$ & 89.1 & 90.0 & 89.2 & 78.9 & 86.8 \\
SwAV & 87.2 & 85.2 & 87.6 & 74.9 & 83.7 \\
SwAV+TC$^2$ & 89.9 & 88.6 & 89.1 & 78.2 & 86.5 \\
VICRegL & 88.0 & 86.1 & 87.9 & 75.2 & 84.3 \\
VICRegL+TC$^2$ & 91.5 & 89.3 & 90.2 & 79.6 & 87.7 \\
\bottomrule
\end{tabular}}
\caption{Transfer learning on PACS dataset.}
\label{tab:app_ood_pacs}
\end{table}

\begin{table}
\centering
\setlength{\tabcolsep}{2 pt}{
\small
\begin{tabular}{lcc}
\toprule
Method & Average Accuracy(\%) & Worst Accuracy (\%) \\
\toprule
SimCLR      & 85.2 & 82.4\\
SimCLR+TC$^2$ & 88.9 & 87.0 \\
BYOL      & 85.9 & 83.1 \\
BYOL+TC$^2$ & 89.4 & 87.8 \\
MAE & 86.8 & 84.0 \\
MAE+TC$^2$ & 90.1 & 88.3 \\
\bottomrule
\end{tabular}}
\caption{Transfer learning on ColoredMNIST dataset.}
\label{tab:app_ood_minst}
\end{table}

\subsection{Ablation Studies}
\label{Sec:Ablation}
In this subsection, we conduct ablation studies to analyze how our proposed method perform well.

\textbf{Loss function.} Our loss function contains two important components: $\mathcal{L}_v$, $\mathcal{L}_w$. We reduce the correlation between generative factors by minimizing $\mathcal{L}_v$ so that they can represent different semantic information. We also enforce correspondence between tasks and their generative factors by minimizing $\mathcal{L}_w$, while constraining the sparsity and diversity of the factors. Experimental results demonstrate the necessity of each component, as showned in Table~\ref{tab:abl_loss}. 

\textbf{Bi-level mechanism}. Here, we conduct experiments to explore the impact of bi-level mechanism, as shown in Table~\ref{tab:abl_loss}. Experimental results show that the bi-level optimization mechanism indeed improves the performance of the model, demonstrating its effectiveness.

\textbf{The number of tasks $n_t$.}
We divide each batch equally into multiple parts to construct SSL tasks. Here, we conduct experiments to analyze the impact of the number of tasks $n_t$ on the experimental results on the CIFAR10 dataset. In the experiment, we set the number of tasks to 2, 4, 8, 16, and 32. Table~\ref{tab:abl_task} shows the effect of the number of tasks on the experiments. The experimental results show that $n_t=4$ achieves the best results, also our settings.

\section{Conclusion}

This paper aims to explore how to improve the transferability of SSL.
Through both theoretical analysis and empirical studies, we find that incorporating task-level information into SSL improves its transferability. However, task conflict introduces instability during the training process, limiting the transferability. To address this issue, we propose TC$^2$ to alleviate the negative impact of task conflict in self-supervised learning. TC$^2$ incorporates task-level information, calibrates task–factor correspondence through dedicated extraction functions, and optimizes the model using a two-stage bi-level optimization mechanism. Experimental results on various downstream tasks confirm the effectiveness of TC$^2$ in improving the transferability of SSL.

\section*{Acknowledgements}
The authors would like to thank the anonymous reviewers for their valuable comments. This work was supported by the National Natural Science Foundation of China under Grants 62506355 and 42506186, and the numerical calculations in this study were partially performed on the ORISE Supercomputer (DFZX202416).

\bibliography{TC2_Final}

\newpage

\appendix
\section*{Appendix}
The appendices provide supplementary materials, including theoretical proofs, implementation details, and extended experimental results, as outlined below:\\
\textbf{Appendix I} presents the proof of the theorem 1.\\
\textbf{Appendix II} describes the setup and all results of the motivation experiment.\\
\textbf{Appendix III} provides the pseudo-code of the proposed method.\\
\textbf{Appendix IV} details the experimental setup and full results, including unsupervised validation, semi-supervised validation, transfer learning, and ablation studies.

\section{I: Proof of Theorem \ref{thm1}}
\label{App_Thm}
Taking contrastive learning as an example, SSL can be regarded as multiple binary classification problems, whose label information is generated by the adopted data augmentation. Therefore, we consider using two binary classification problems as research objects to analyze the task conflict problem caused by the introduction of multi-task information into SSL. Here, $Y_i$ and $Y_j$ represent the label variables of task $i$ and task $j$ respectively, and belong to $\left\{ { \pm 1} \right\}$, while $X_i$ and $X_j$ correspond to sample variables.

From a generation perspective, training samples can be directly determined by generative factors. We set the generative factors of samples in tasks $i$ and $j$ to be $F^u_i$ and $F^u_j$ respectively, which can be characterized by Gaussian distribution. Assuming that there is no overlap between $F^u_i$ and $F^u_j$, they can be represented by a multidimensional Gaussian distribution:
\begin{equation}
\begin{array}{l}
    F^u_i\sim \mathcal{N}(Y_i\cdot \mu_i,\sigma_i^2 I)\\
    F^u_j\sim \mathcal{N}(Y_j\cdot \mu_j,\sigma_j^2 I)
\end{array}
\end{equation}
where $\mu_i, \sigma_i^2$ represents the mean and variance respectively. For simplicity, we assume that labels originate from different distributions and the sampling distribution of labels is the same, $P(Y=1)=P(Y=-1)=0.5$ (not mandatory). In the multi-task joint learning setting, we define $c_{ij}$ as the task correlation caused by task conflict, $P(Y_i=Y_j) = 1-P(Y_i=Y_j) =c_{ij}$. Next, we can deduce that the optimal classifier for each task has non-zero weights for its non-generative factors according to the following theorem.

\renewcommand{\thetheorem}{1}
\begin{theorem}
If the correlation between $Y_i$ and $Y_j$ is not equal to 0.5, then the optimal model for task $i$ has a non-zero weight on $F^u_j$. If the correlation is equal to 0.5 with limited training samples, then the optimal classifier for task $i$ also has non-zero weight on factor $F^u_j$.
\end{theorem}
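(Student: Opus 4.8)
The plan is to read ``the optimal model for task $i$'' as the risk minimizer, over linear predictors of $Y_i$ built on the \emph{concatenated} feature $Z=(F^u_i,F^u_j)$ that the multi-task objective forces the encoder to represent, and to show that its block acting on the $F^u_j$-coordinates is nonzero. I would carry this out with the squared loss, where the population optimum has the closed form $w^\star=\Sigma_Z^{-1}b$ with $b=(\mathrm{Cov}(F^u_i,Y_i),\,\mathrm{Cov}(F^u_j,Y_i))$; the conclusion then carries over to the logistic / Bayes classifier, whose decision boundary depends on a coordinate exactly when that coordinate remains informative about the label after conditioning on the rest.

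First I would compute the second moments induced by the generative model. Using $\mathbb{E}[Y_i]=0$, $\mathbb{E}[Y_i^2]=1$, and $\mathbb{E}[Y_iY_j]=2c_{ij}-1$, a short calculation through the conditional means $\mathbb{E}[F^u_i\mid Y_i]=Y_i\mu_i$, $\mathbb{E}[F^u_j\mid Y_j]=Y_j\mu_j$ gives $\mathrm{Cov}(F^u_i,Y_i)=\mu_i$, $\mathrm{Cov}(F^u_j,Y_i)=(2c_{ij}-1)\mu_j$, $\Sigma_{ii}=\sigma_i^2 I+\mu_i\mu_i^{\top}$, $\Sigma_{jj}=\sigma_j^2 I+\mu_j\mu_j^{\top}$, and $\Sigma_{ij}=(2c_{ij}-1)\mu_i\mu_j^{\top}$. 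The point is that every cross-task quantity carries the factor $2c_{ij}-1$. I would then extract the $F^u_j$-block of $w^\star$ by a Schur complement on the $i$-block: $w^\star_j=\big(\Sigma_{jj}-\Sigma_{ji}\Sigma_{ii}^{-1}\Sigma_{ij}\big)^{-1}\big(b_j-\Sigma_{ji}\Sigma_{ii}^{-1}b_i\big)$. Substituting the moments and using Sherman--Morrison to get $\mu_i^{\top}\Sigma_{ii}^{-1}\mu_i=\|\mu_i\|^2/(\sigma_i^2+\|\mu_i\|^2)$, the bracketed vector simplifies to $(2c_{ij}-1)\tfrac{\sigma_i^2}{\sigma_i^2+\|\mu_i\|^2}\,\mu_j$; since the Schur complement is positive definite (hence invertible) and $\mu_j\neq 0$, we get $w^\star_j\neq 0$ precisely when $c_{ij}\neq\tfrac12$ (i.e., when $Y_i$ and $Y_j$ are correlated). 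This establishes the first claim.

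For the second claim ($c_{ij}=\tfrac12$, limited samples), the population cross-moment vanishes and the population optimum puts zero weight on $F^u_j$, so I would pass to the empirical risk minimizer $\hat w=\hat\Sigma_Z^{-1}\hat b$ (and, in the overparameterized regime, the minimum-norm interpolant, handled the same way). Re-running the block-inversion step, the $F^u_j$-block is controlled by $\hat b_j-\hat\Sigma_{ji}\hat\Sigma_{ii}^{-1}\hat b_i$, a non-constant real-analytic function of the draws; since the samples have a density, the set on which this vector vanishes has Lebesgue measure zero, so almost surely $\hat w_j\neq 0$. Intuitively: with finitely many samples the spurious empirical correlation between $F^u_j$ and $Y_i$ is generically nonzero, and the unregularized optimum exploits it.

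The main obstacle is twofold. First, pinning down ``optimal'' robustly across linear vs. Bayes classifier and population vs. empirical regime: the linear--least-squares route is the cleanest, and I would only remark that for logistic regression the same conclusion holds because $F^u_j$ stays informative about $Y_i$ given $F^u_i$ whenever $c_{ij}\neq\tfrac12$. Second, ruling out an algebraic cancellation that would annihilate the $F^u_j$-block --- this is exactly where the Sherman--Morrison step does the work, certifying that the scalar $1-\mu_i^{\top}\Sigma_{ii}^{-1}\mu_i=\sigma_i^2/(\sigma_i^2+\|\mu_i\|^2)$ is strictly positive, so the $\mu_j$ contribution cannot vanish. The finite-sample part additionally needs a clean generic-nondegeneracy statement (absolute continuity of the sample together with the fact that the zero set of a nontrivial analytic map has measure zero), which I would isolate as a short lemma rather than belabor.
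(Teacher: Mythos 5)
Your proposal is correct, but it takes a genuinely different route from the paper. The paper works directly with the Bayes-optimal classifier: it expands $P(Y_i\mid F^u_i,F^u_j)$ by marginalizing over $Y_j$, obtaining a posterior proportional to $c_{ij}e^{Y_i\gamma_1}+(1-c_{ij})e^{Y_i\gamma_2}$ with $\gamma_{1,2}=\frac{\mu_i}{\sigma_i^2}F^u_i\pm\frac{\mu_j}{\sigma_j^2}F^u_j$, and observes that the $F^u_j$-dependence cancels exactly when $c_{ij}=\tfrac12$ and survives otherwise. You instead read ``optimal model'' as the population least-squares predictor and compute its $F^u_j$-block in closed form via the Schur complement and Sherman--Morrison, finding it proportional to $(2c_{ij}-1)\,\frac{\sigma_i^2}{\sigma_i^2+\|\mu_i\|^2}\,\mu_j$; your moment calculations and the positivity certificate ruling out cancellation are all correct (modulo the harmless implicit assumption $\mu_j\neq0$). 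What the paper's route buys is an exact statement about the Bayes classifier itself, which is what the theorem nominally refers to and which you only treat by a remark about conditional informativeness; what your route buys is an explicit, interpretable weight formula and, importantly, an actual argument for the second claim ($c_{ij}=\tfrac12$ with finitely many samples) via the empirical risk minimizer and a generic-nondegeneracy (measure-zero) lemma --- a part of the theorem that the paper's proof asserts but does not in fact establish, since it only shows the population-optimal classifier ignores $F^u_j$ in that case. So your plan is not only sound but, on the finite-sample half, more complete than the paper's own argument; the remaining work on your side is to either carry out the logistic/Bayes extension explicitly (the paper's mixture computation is exactly the tool) and to write down the nondegeneracy lemma carefully (nonconstancy of the analytic map on the set where $\hat\Sigma_{ii}$ is invertible).
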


\begin{proof}
We train a model using joint learning of two tasks and analyze whether the best classifier of the target task for a single task will be affected by the generative factors from other tasks. Taking task $i$ as an example, the optimal Bayesian classifier is expressed as:
\begin{equation}
\begin{aligned}
    P(Y_i|F^u_i, F^u_j) &=\frac{P(Y_i,F^u_i, F^u_j)}{P(F^u_i, F^u_j)} \\
    &=\frac{P(Y_i,F^u_i, F^u_j)}{ {\textstyle \sum\nolimits_{Y_i\in \left \{ -1,1 \right \}  }} P(Y_i,F^u_i, F^u_j)} 
\end{aligned}
\label{eq:classifier_1}
\end{equation}
where the probability of $P(Y_i,F^u_i, F^u_j)$ could be written as:
\begin{equation}
\begin{aligned}
     P(Y_i,F^u_i, F^u_j)
     &= P(Y_i,F^u_i)\cdot P(F^u_j|Y_i,F^u_i)\\
     &= P(Y_i,F^u_i)\cdot P(F^u_i|Y_i)\\
     &= P(Y_i,F^u_i)\cdot\sum\nolimits_{Y_j\in \{ -1,1 \} }P(F^u_j,Y_j|Y_i)\\
     &= P(Y_i)P(F^u_i|Y_i)\cdot\\
     & \sum\nolimits_{Y_j\in \{ -1,1 \} }P(F^u_j|Y_j)P(Y_j|Y_i)
\end{aligned}
\end{equation}
Since the generative factors obey Gaussian distribution, $P(Y_{i},F^u_i,F^u_j)=sigmoid(\frac{\mu _i}{\sigma^2_i}F^u_i+\frac{\mu _j}{\sigma^2_j}F^u_j)$ where $\frac{\mu _i}{\sigma^2_i}$ and $\frac{\mu _j}{\sigma^2_j}$ are the regression vectors for the optimal Bayesian classifier, then we have:
\begin{equation}
\begin{aligned}
     &P(Y_i,F^u_i,F^u_j)\\ &= P(Y_i)P(F^u_i|Y_i)\cdot\sum\nolimits^{}_{Y_j\in \{ -1,1 \} }P(F^u_j|Y_j)P(Y_j|Y_i)\\
     &\propto e^{Y_i\cdot \frac{\mu _i}{\sigma^2_i}F^u_i}(c_{ij}e^{Y_i\cdot \frac{\mu _j}{\sigma^2_j}F^u_j}+(1-c_{ij})e^{-Y_i\cdot \frac{\mu _j}{\sigma^2_j}F^u_j}) \\[8pt]
     &=c_{ij}e^{Y_i\cdot (\frac{\mu _i}{\sigma^2_i}F^u_i+\frac{\mu _j}{\sigma^2_j}F^u_j)}+(1-c_{ij})e^{Y_i\cdot (\frac{\mu _i}{\sigma^2_i}F^u_i-\frac{\mu _j}{\sigma^2_j}F^u_j)}\\
     &=c_{ij}e^{Y_i\cdot \gamma_1}+(1-c_{ij})e^{Y_i\cdot \gamma_2}
\end{aligned}
\end{equation}
where $\gamma_1=\frac{\mu _i}{\sigma^2_i}F^u_i+\frac{\mu _j}{\sigma^2_j}F^u_j,\gamma_2=\frac{\mu _i}{\sigma^2_i}F^u_i-\frac{\mu _j}{\sigma^2_j}F^u_j$, Thus, we have:
\begin{equation}
\begin{aligned}
    P(Y_i|F^u_i,F^u_j) = \frac{1}{1+\frac{p_{tc}e^{Y_i\cdot \gamma_1}+(1-p_{tc})e^{Y_i\cdot \gamma_2 }}{p_{tc}e^{-Y_i\cdot \gamma_1}+(1-p_{tc})e^{-Y_i\cdot \gamma_2 }} } 
\end{aligned}
\label{eq:classifier_2}
\end{equation}

When $c_{ij}=0.5$:
\begin{equation}
\begin{aligned}
    P(Y_i|F^u_i,F^u_j) = \frac{1}{1+e^{Y_i\cdot (\gamma_1 +\beta^-)}} = \frac{1}{1+e^{2Y_i\cdot (\frac{\mu _i}{\sigma^2_i}F^u_i)}}
\end{aligned}
\end{equation}
It can be shown that when the correlation is 0.5, the optimal classifier only uses the generative factors $F^u_{i}$ for task $i$.

When $c_{ij}\ne 0.5$, according to Eq.\ref{eq:classifier_2}, the optimal classifier for each task will be affected by the generative factors of another task.

\end{proof}

\begin{algorithm}[t]
\caption{SSL with TC$^2$}
\label{alg:algorithm}
\begin{flushleft}
\textbf{Input}: Training set $X$; initialize self-supervised model with a encoder $f_\theta$; Randomly initialize networks $f_v$ and $f_w$.\\
\textbf{Parameter}: Learning rates $\beta_1$ and $\beta_2$ for the learning of $f_{\theta}$; Learning rates $\alpha_1$, $\alpha_2$, $\alpha_3$, $\alpha_4$ for the learning of $W$ and $V$; Parameter $\lambda _s$ for the balancing term in loss function.\\
\textbf{Output}: self-supervised model $f_\theta$\\
\end{flushleft}
\begin{algorithmic}[1]
\FOR{sample batch $X_{tr}$ from $X$}
        \STATE Obtain $B =\{ B_1,...,B_k,...,B_K\}$ via splitting
        \STATE Obtain $B^{aug} =\{ B_1^{aug},...,B_k^{aug},...,B_K^{aug}\}$ via augmentation
        \STATE Update $f_v$ and $f_w$ with Eq.\ref{eq:wv_inner} and Eq.\ref{eq:wv_outer}
        \STATE Update SSL model $f_\theta$ via Eq.\ref{eq:f_k} and Eq.\ref{eq:f}

\ENDFOR
\STATE \textbf{return} $f_\theta$
\end{algorithmic}
\end{algorithm}

\section{II: Motivational Experiment}
\label{App_toy}
In this section, we introduce the full details and experimental results of the motivating experiments in the main text.

\begin{figure}[t]
    \centering
    \includegraphics[width=\linewidth]{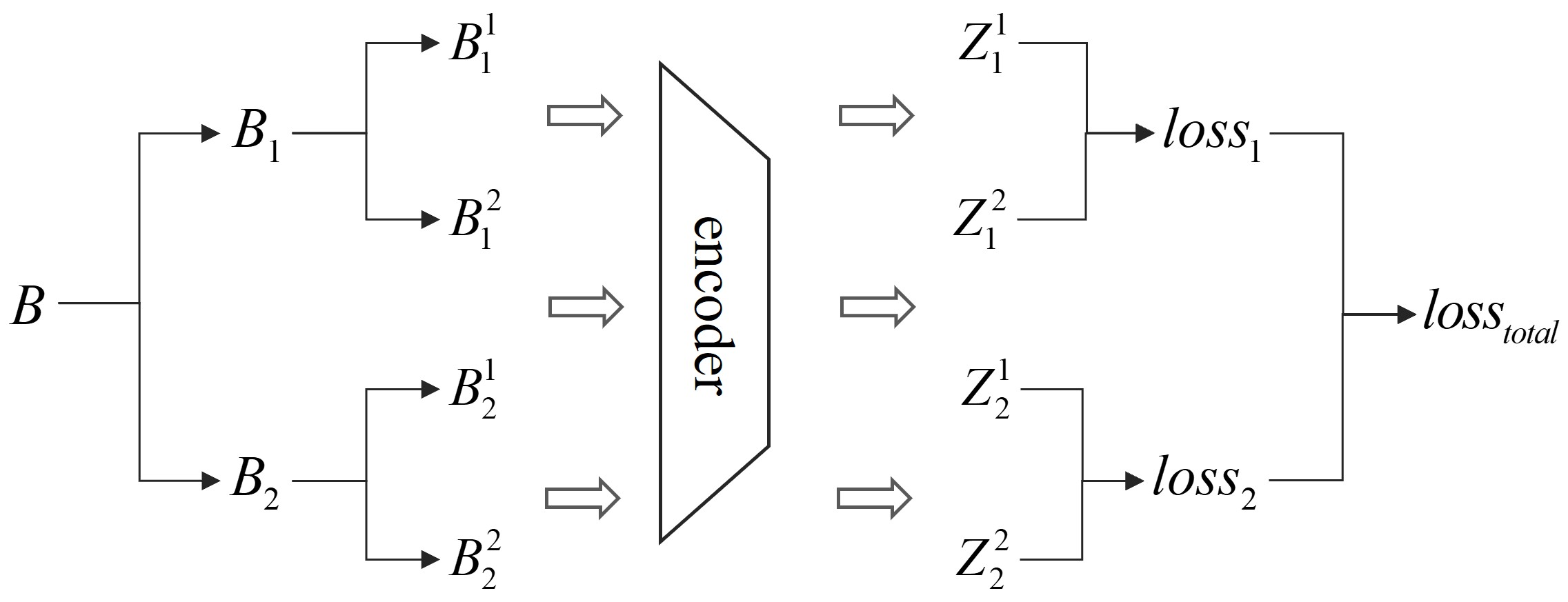}
    \caption{Model training pipeline for the motivational experiment. The encoder is jointly optimized based on two tasks.}
    \label{fig:motiva_pipe}
\end{figure}

\begin{figure*}[t]
    \centering
    \begin{subfigure}[b]{0.33\textwidth}
         \includegraphics[width=\textwidth]{motivation_1_bar.pdf}
         \caption{Performance on CIFAR-100}
    \end{subfigure}
    \hfill
    \begin{subfigure}[b]{0.33\textwidth}
         \includegraphics[width=\textwidth]{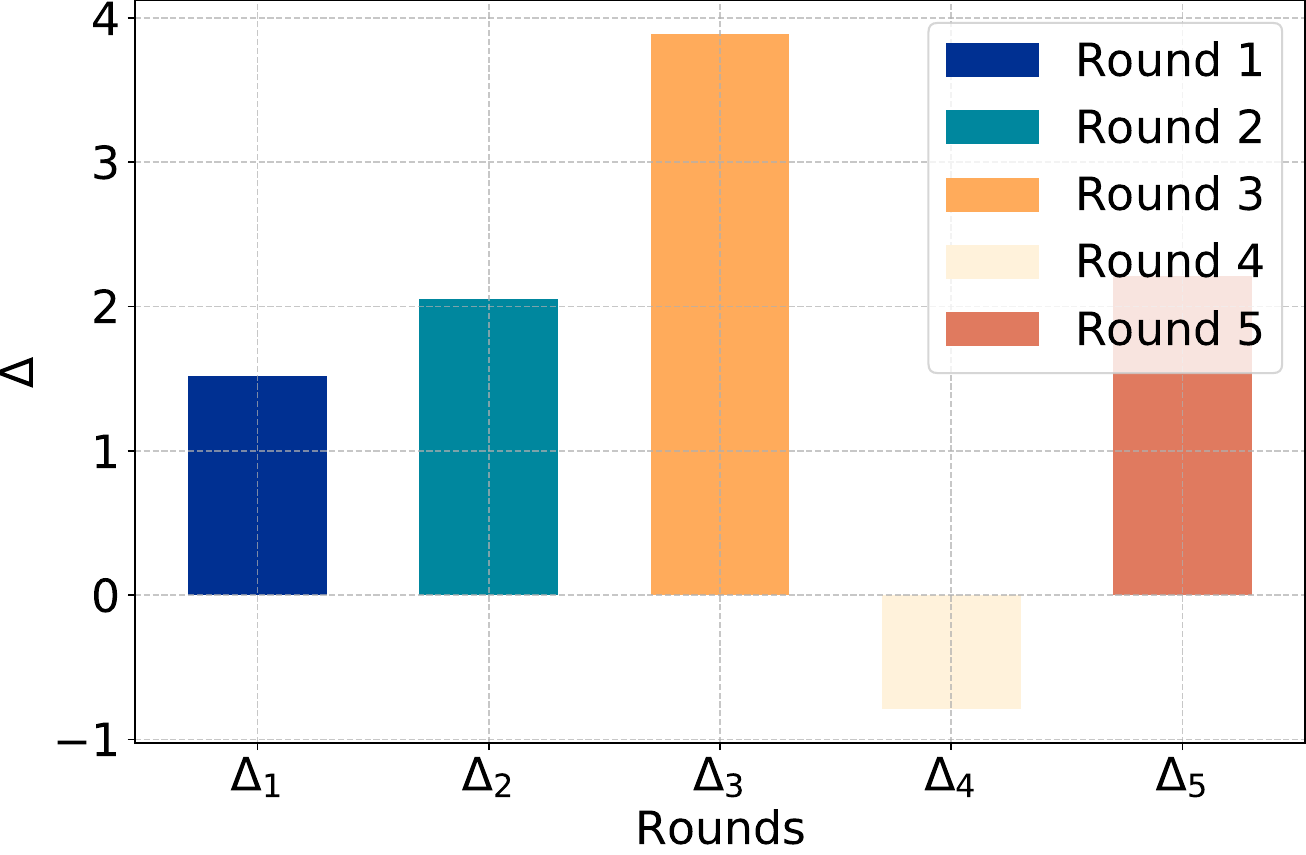}
         \caption{Performance on Flower102}
    \end{subfigure}
    \hfill
    \begin{subfigure}[b]{0.33\textwidth}
         \includegraphics[width=\textwidth]{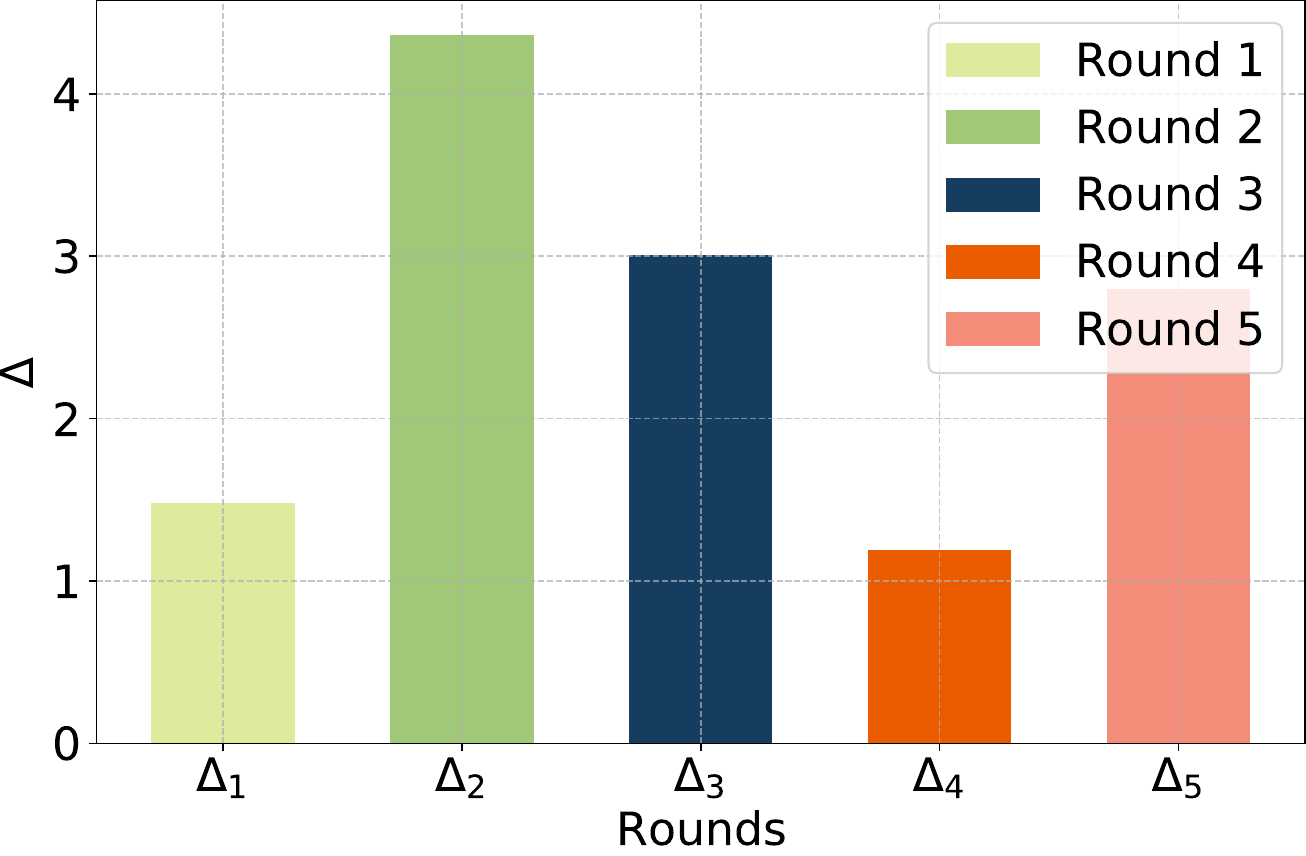}
         \caption{Performance on Aircraft}
    \end{subfigure}
    \caption{Effect of task-level information. It shows the effect of SimCLR trained on ImageNet and tested on three different datasets before and after introducing task-level information in five runs.}
    \label{fig_app:motivation}
\end{figure*}

\begin{table*}[t]
\centering
\setlength{\tabcolsep}{3pt}
\small
\begin{tabular}{l|cc|cc|cc}
\toprule
\multirow{2}{*}{Method} & \multicolumn{2}{c|}{CIFAR-10} & \multicolumn{2}{c|}{CIFAR-100} & \multicolumn{2}{c}{STL-10} \\
~ & Top-1 & 5-NN & Top-1 & 5-NN & Top-1 & 5-NN \\
\midrule
SimCLR  &  91.80$\pm$0.16  &  88.42$\pm$0.15  &  66.83$\pm$0.23  &  56.57$\pm$0.18  &  90.51$\pm$0.14  &  85.68$\pm$0.11 \\
BYOL  &  91.73$\pm$0.23  &  89.26$\pm$0.22  &  66.60$\pm$0.16  &  56.82$\pm$0.18  &  91.99$\pm$0.13  &  88.64$\pm$0.20 \\
MoCo  &  91.69$\pm$0.13  &  88.66$\pm$0.15  &  67.02$\pm$0.16  &  56.29$\pm$0.25  &  90.64$\pm$0.29  &  88.01$\pm$0.20 \\
Barlow Twins  &  90.88$\pm$0.19  &  88.78$\pm$0.21  &  66.67$\pm$0.10  &  56.39$\pm$0.24  &  90.71$\pm$0.13  &  85.31$\pm$0.23 \\
SimSiam  &  91.71$\pm$0.27  &  88.65$\pm$0.17  &  67.22$\pm$0.26  &  56.36$\pm$0.19  &  91.01$\pm$0.20  &  88.16$\pm$0.22 \\
W-MSE  &  91.99$\pm$0.12  &  89.87$\pm$0.25  &  67.64$\pm$0.16  &  56.45$\pm$0.26  &  91.75$\pm$0.23  &  88.59$\pm$0.15 \\
SwAV  &  90.17$\pm$0.19  &  89.52$\pm$0.24  &  66.56$\pm$0.17  &  57.01$\pm$0.24  &  90.72$\pm$0.29  &  86.24$\pm$0.26 \\
DINO  &  91.83$\pm$0.25  &  90.15$\pm$0.33  &  67.15$\pm$0.21  &  56.48$\pm$0.19  &  91.03$\pm$0.12  &  86.15$\pm$0.25 \\
SSL-HSIC  &  91.95$\pm$0.14  &  89.91$\pm$0.17  &  67.22$\pm$0.26  &  57.01$\pm$0.27  &  92.09$\pm$0.20  &  88.91$\pm$0.29 \\
C-JEPA  &  91.65$\pm$0.08  &  90.05$\pm$0.21  &  67.46$\pm$0.13  &  56.88$\pm$0.22  &  91.89$\pm$0.13  &  88.07$\pm$0.15 \\
VICRegL  &  90.99$\pm$0.13  &  88.75$\pm$0.26  &  68.03$\pm$0.32  &  57.34$\pm$0.29  &  92.12$\pm$0.26  &  90.01$\pm$0.21 \\
\midrule
SimCLR+TC$^2$  &  93.84$\pm$0.22  &  91.75$\pm$0.23  &  \textbf{69.89$\pm$0.26}  &  59.65$\pm$0.27  &  93.89$\pm$0.27  &  88.99$\pm$0.26 \\
BYOL+TC$^2$  &  \textbf{94.89$\pm$0.24}  &  \textbf{91.96$\pm$0.22}  &  68.93$\pm$0.25  &  59.79$\pm$0.26  &  \textbf{94.93$\pm$0.25}  &  \textbf{91.19$\pm$0.27} \\
C-JEPA+TC$^2$  &  93.55$\pm$0.12  &  91.34$\pm$0.17  &  68.23$\pm$0.11  &  58.52$\pm$0.12  &  93.47$\pm$0.15  &  89.17$\pm$0.23 \\
Barlow Twins+TC$^2$  &  94.41$\pm$0.27  &  91.81$\pm$0.26  &  68.86$\pm$0.21  &  \textbf{59.99$\pm$0.28}  &  94.35$\pm$0.21  &  89.51$\pm$0.27 \\
\bottomrule

\multirow{2}{*}{Method} & \multicolumn{2}{c|}{Tiny ImageNet} & \multicolumn{2}{c|}{ImageNet-100} & \multicolumn{2}{c}{ImageNet} \\
~ & Top-1 & 5-NN & Top-1 & 5-NN & Top-1 & 5-NN \\
\midrule
SimCLR  &  48.82$\pm$0.15  &  32.86$\pm$0.25  &  70.15$\pm$0.16  &  89.75$\pm$0.14  &  68.32$\pm$0.31  &  89.76$\pm$0.23 \\
BYOL  &  51.01$\pm$0.12  &  36.24$\pm$0.28  &  74.89$\pm$0.23  &  92.83$\pm$0.21  &  74.31$\pm$0.29  &  91.62$\pm$0.25 \\
MoCo  &  50.92$\pm$0.22  &  35.55$\pm$0.18  &  72.81$\pm$0.12  &  89.75$\pm$0.11  &  71.37$\pm$0.27  &  88.94$\pm$0.11 \\
Barlow Twins  &  49.74$\pm$0.26  &  33.61$\pm$0.19  &  72.88$\pm$0.23  &  90.99$\pm$0.19  &  73.22$\pm$0.31  &  91.01$\pm$0.27 \\
SimSiam  &  51.14$\pm$0.21  &  35.67$\pm$0.18  &  73.01$\pm$0.22  &  92.61$\pm$0.27  &  70.02$\pm$0.14  &  88.76$\pm$0.23 \\
W-MSE  &  49.22$\pm$0.16  &  35.44$\pm$0.10  &  76.01$\pm$0.27  &  93.12$\pm$0.21  &  70.85$\pm$0.31  &  91.57$\pm$0.20 \\
SwAV  &  52.02$\pm$0.25  &  37.40$\pm$0.11  &  75.77$\pm$0.18  &  92.86$\pm$0.17  &  69.12$\pm$0.24  &  89.38$\pm$0.26 \\
DINO  &  51.13$\pm$0.30  &  37.86$\pm$0.19  &  75.43$\pm$0.18  &  93.32$\pm$0.19  &  70.58$\pm$0.24  &  91.32$\pm$0.27 \\
SSL-HSIC  &  51.37$\pm$0.15  &  36.03$\pm$0.12  &  74.77$\pm$0.22  &  92.56$\pm$0.25  &  72.13$\pm$0.28  &  90.33$\pm$0.29 \\
VICRegL  &  51.52$\pm$0.13  &  36.24$\pm$0.16  &  75.96$\pm$0.21  &  92.97$\pm$0.26  &  70.24$\pm$0.27  &  91.60$\pm$0.24 \\
\midrule
SimCLR+TC$^2$  &  52.21$\pm$0.25  &  37.42$\pm$0.26  &  74.89$\pm$0.28  &  92.97$\pm$0.29  &  72.99$\pm$0.30  &  92.96$\pm$0.27 \\
BYOL+TC$^2$  &  \textbf{54.25$\pm$0.28}  &  \textbf{39.45$\pm$0.27}  &  \textbf{77.74$\pm$0.29}  & \textbf{ 95.82$\pm$0.25}  &  \textbf{77.05$\pm$0.26}  &  \textbf{93.99$\pm$0.29} \\
Barlow Twins+TC$^2$  &  52.94$\pm$0.19  &  37.55$\pm$0.23  &  75.95$\pm$0.22  &  93.34$\pm$0.23  &  75.94$\pm$0.24  &  92.99$\pm$0.27 \\
\bottomrule
\end{tabular}
\caption{Unsupervised Evaluation. The classification accuracies with a ResNet-18 for the first four datasets and ResNet-50 for the last two datasets. The best results are highlighted in bold.}
\label{tab:self_un1}
\end{table*}

Existing SSL models do not explicitly incorporate transferability into their loss functions or architectures, prompting us to explore whether modeling transferability during training can further enhance performance. Motivated by meta-learning~\cite{finn2017model, hospedales2021meta} which acquires transferable knowledge from diverse tasks, we construct multiple tasks during the SSL training phase to examine the impact on transferability.
In our experiments, each randomly sampled training batch \(B\) is evenly divided into two subsets, \(B_1\) and \(B_2\), which serve as inputs for two independent tasks. For each task, samples are augmented using two random transformation strategies to generate paired views. These views are processed by a shared encoder (ResNet-18) to extract feature representations. Each task follows a conventional contrastive learning framework (e.g., SimCLR), and the encoder is jointly optimized using the combined loss from both tasks, as illustrated in Figure \ref{fig:motiva_pipe}. We adopt SimCLR as the baseline model, training on ImageNet-100 and testing on CIFAR-100. First, we measure the baseline model's Top-1 accuracy without task construction, denoted as \( acc(\mathrm{SimCLR}) \). Then, under identical experimental settings, we evaluate the performance of the multi-task learning mechanism across five independent training rounds, with the \( i \)-th round's accuracy denoted as \( acc(\mathrm{SimCLR+T}, i) \). Finally, we calculate the improvement in classification performance for each round as \( \Delta_i = acc(\mathrm{SimCLR+T}, i) - acc(\mathrm{SimCLR}) \). The experimental results in Figure \ref{fig:motivation_comparison} show that Incorporating task-level information into SSL improves transferability, yet it remains vulnerable to gradient conflicts between tasks.

To validate the universality of this phenomenon, we further test on additional datasets. Specifically, in addition to CIFAR-100, we selected the Flower102~\cite{cibuk2019efficient} and Aircraft~\cite{maji2013fine} datasets to evaluate model transferability. The experimental results in Figure \ref{fig_app:motivation} reached a similar conclusion: incorporating task-level information into SSL significantly improves performance, although outcomes vary between the best and worst cases.

\section{III: Pseudo-Code}
\label{Pseudo-Code}

In this section, the pseudocode of the algorithm is presented in Algorithm~\ref{alg:algorithm}.

\section{IV: Experiment Details and Results}
\label{App_EDR}

\begin{table}[t]
\centering
\setlength{\tabcolsep}{3 pt}{
\small
\begin{tabular}{c|cccccc}
    \toprule
    Method  & pre-train data & ViT-B & ViT-L & ViT-H & ViT-H$_{448}$\\
    \midrule
    DINO & IN1K & 82.8 & - & - & -\\
    MoCo & IN1K & 83.2 & 84.1 & - & - \\
    BEiT & IN1K+DALLE & 83.2 & 85.2 & - & - \\
    MAE & IN1K & 83.6 & 85.9 & 86.9 & 87.8\\
    \midrule
    MAE+TC$^2$ & IN1K & 86.2 & 87.3 & 88.2 & 89.0\\
    \bottomrule
\end{tabular}}
\caption{Comparisons with previous results on ImageNet-1K. The ViT models are B/16, L/16, H/14. The pre-training data is the ImageNet-1K training set (except the tokenizer in BEiT was pre-trained on 250M DALLE data~\cite{ramesh2021zero}). All results are on an image size of 224, except for ViT-H with an extra result of 448.}
\label{tab:gssl_1}
\end{table}

\begin{table*}[t]
\centering
    \setlength{\tabcolsep}{3 pt}{
    \small
\begin{tabular}{l|c|c|c|c|c|c|c|c|c|c|c|c}
\toprule
Method      & A $\to$ C & A $\to$ P & A $\to$ R & C $\to$ A & C $\to$ P & C $\to$ R & P $\to$ A & P $\to$ C & P $\to$ R & R $\to$ A & R $\to$ C & R $\to$ P \\
\hline
SimCLR & 58.2  & 63.5  & 69.8  & 78.9  & 69.7  & 66.8  & 63.4  & 52.3  & 58.4  & 56.1  & 72.9  & 71.0 \\
SimCLR+TC$^2$ & 60.1 & 65.7 & 71.2 & 79.0 & 73.6 & 68.2 & 66.1 & 56.4 & 60.5 & 58.0 & 59.4 & 74.9 \\
BYOL & 58.5 & 64.1 & 69.2 & 78.1 & 68.5 & 67.2 & 64.0 & 53.8 & 58.7 & 56.9 & 72.1 & 71.4 \\
BYOL+TC$^2$ & 60.2 & 66.4 & 72.4 & 80.0 & 72.4 & 70.2 & 67.3 & 56.4 & 60.2 & 58.8 & 75.9 & 73.2 \\
Barlow Twins & 59.3 & 63.8 & 70.2 & 79.5 & 70.2 & 67.6 & 64.3 & 54.0 & 58.1 & 57.0 & 71.8 & 69.4 \\
Barlow Twins+TC$^2$ & 61.2 & 65.7 & 72.9 & 81.3 & 73.8 & 70.2 & 66.1 & 57.3 & 60.8 & 60.1 & 74.6 & 71.5 \\
\bottomrule
\end{tabular}}
\caption{Transfer learning on OfficeHome dataset.}
\label{tab:app_ood_office}
\end{table*}

\begin{table*}
    \centering
    \setlength{\tabcolsep}{3 pt}{
    \begin{tabular}{lccccc}
    \toprule
    \multirow{2}*{Method} & \multirow{2}*{Epochs}  & \multicolumn{2}{c}{1\%} & \multicolumn{2}{c}{10\%} \\\cline{3-6}
    ~ & ~ & Top-1 & Top-5 & Top-1 &  Top-5 \\
    \midrule
    MoCo & 200 &  43.8$\pm 0.2$  & 72.3$\pm 0.1$ & 61.9$\pm 0.1$  & 84.6$\pm 0.2$\\
    BYOL & 200 &  54.8$\pm 0.2$  & 78.8$\pm 0.1$ & 68.0$\pm 0.2$  & 88.5$\pm 0.2$\\
    \midrule
    SimCLR & 1000 &  48.3$\pm 0.2$  & 75.5$\pm 0.1$ & 65.6$\pm 0.1$  & 87.8$\pm 0.2$\\
    MoCo & 1000 &  52.3$\pm 0.1$  & 77.9$\pm 0.2$ & 68.4$\pm 0.1$  & 88.0$\pm 0.2$\\
    BYOL & 1000 & 53.2$\pm 0.2$  & 78.4$\pm 0.2$ & 68.8$\pm 0.2$ & 89.0$\pm 0.1$\\
    SimSiam & 1000 & 54.9$\pm 0.2$  & 79.5$\pm 0.2$ & 68.1$\pm 0.1$ & 89.0$\pm 0.3$\\
    SwAV & 1000 & 53.9$\pm 0.1$  & 78.5$\pm 0.2$ & 70.2$\pm 0.1$  & 89.9$\pm 0.2$\\
    SSL-HSIC & 1000 & 55.4$\pm 0.3$  & 80.1$\pm 0.2$ & 70.4$\pm 0.1$  & 90.1$\pm 0.1$\\
    BarlowTwins & 1000 & 55.0$\pm 0.1$  & 79.2$\pm 0.1$ & 69.7$\pm 0.2$  & 89.3$\pm 0.2$\\
    VICRegL & 1000 & 54.9$\pm 0.1$  & 79.6$\pm 0.2$ & 67.2$\pm 0.1$  & 89.4$\pm 0.2$\\
    \midrule
SimCLR+TC$^2$ & 1000 & 50.9$\pm 0.2$ &  77.4$\pm 0.1$ & 67.3$\pm 0.2$  & 89.9$\pm 0.3$\\
BYOL+TC$^2$ & 1000 & 55.8$\pm 0.3$  & 80.2$\pm 0.2$ & 70.5$\pm 0.2$  & 90.8$\pm 0.1$\\
Barlow Twins+TC$^2$ & 1000 & \textbf{57.9$\pm 0.2$}  & \textbf{81.6$\pm 0.3$} & \textbf{71.9$\pm 0.2$} & \textbf{92.2$\pm 0.2$}\\
\bottomrule
    \end{tabular}
    }
\caption{Semi-Supervised Evaluation on ImageNet. We finetune the pre-trained model using 1\% and 10\% training samples of ImageNet, and the Top-1 and Top-5 under linear evaluation are reported. The best results are highlighted in bold.}
\label{tab:semi}
\end{table*}

\subsection{Data Augmentation and Paramter setting}

We adopt standard image augmentation strategies, including random cropping, resizing, horizontal flipping, color jittering, grayscaling, and Gaussian blurring. Specifically, each image is randomly cropped to a size ranging from 20\% to 100\% of the original area, with an aspect ratio randomly selected between $3/4$ and $4/3$. For generating two augmented views, the following transformations are applied with specified probabilities: horizontal flipping with probability 0.5, color jittering with configuration $(0.4, 0.4, 0.4, 0.1)$ with probability 0.8, and conversion to grayscale with probability 0.1. For ImageNet-100 and ImageNet, we follow slightly stronger augmentation settings: cropping size from 8\% to 100\% of the original area, color jittering with configuration $(0.8, 0.8, 0.8, 0.2)$ applied with probability 0.8, grayscaling with probability 0.2, and Gaussian blurring with probability 0.5. All experiments are conducted on NVIDIA V100 GPUs. To ensure statistical robustness, all reported results are averaged over 10 runs with different random seeds.

We adopt ResNet-18 and ResNet-50 as the backbone networks to extract visual features. Each backbone is followed by a linear projection head that maps the learned representations into a latent embedding space for contrastive learning. To implement our proposed Task Conflict Calibration (TC$^2$) module, we use a lightweight three-layer multi-layer perceptron (MLP), consisting of two hidden layers with ReLU activation and an output layer producing task-specific modulation weights. By default, we construct four self-supervised tasks within each mini-batch, enabling the model to capture diverse task-level variations and facilitating the analysis of task conflict. This configuration balances computational efficiency with effective task modeling.

\subsection{Unsupervised Learning}
\label{sec_app:unsupervised}

We employ the most commonly used SSL protocol~\cite{ermolov2021whitening} to train a supervised classifier based on the frozen feature extractors. In experiment, we use a ResNet-18 as a feature extractor for the first four datasets, and a ResNet-50 for other datasets. Several existing SSL methods are used as baselines, including  SimCLR~\cite{chen2020simple}, BarlowTwins~\cite{zbontar2021barlow}, BYOL~\cite{grill2020bootstrap}, SimSiam~\cite{chen2021exploring}, W-MES~\cite{ermolov2021whitening}, SwAV~\cite{caron2020unsupervised}, MoCo~\cite{he2020momentum},DINO~\cite{caron2021emerging}, SSL-HSIC~\cite{li2021self} and VICRegL~\cite{bardes2022vicregl},and C-JEPA~\cite{mo2024connecting}. The classification accuracies of a linear classifier (Top-1) and a 5-nearest neighbors (5-NN) classifier for evaluation on small datasets, and Top-5 classification accuracies for evaluation on other datasets.
Table~\ref{tab:self_un1} shows the results of unsupervised experiments on multiple datasets, where ``+TC$^2$'' means our proposed method with task conflict calibration. On small-scale datasets, BYOL and SimCLR improve by more than 3\% on CIFAR10 and Tiny ImageNet~\cite{le2015tiny} datasets. On large-scale datasets, SimCLR also improved over 4\% on the ImageNet-100 dataset~\cite{russakovsky2015imagenet}.
In summary, our method adds task information to baselines while mitigating the impact of task conflict, improving results in unsupervised learning.
In addition, we conducted comparative experiments with generative self-supervised learning methods, and the results are presented in Table~\ref{tab:gssl_1}. The results also demonstrate the effectiveness of our method.

To provide a fair comparison with existing multi-task learning approaches, we conducted experiments on the CIFAR-10 dataset, using the same task partitioning strategy described earlier. The baseline methods included PCGrad~\cite{yu2020gradient}, GradVac~\cite{wang2020gradient} and GradDrop~\cite{chen2020just}, and we evaluated performance using the Top-1 classification accuracy. For self-supervised learning, we employed SimCLR. Experimental results show that our method outperforms by 2.13\%, 1.71\%, and 1.33\%, respectively.

\subsection{Semi-supervised Learning}
\label{sec_app:semi-supervised}

Here, we adopt the most commonly used protocol for semi-supervised learning~\cite{zbontar2021barlow}. We create two balanced subsets by sampling 1\% and 10\% of the training dataset. We fine-tune all models on these two subsets for 50 epochs with different learning rates for the classifier and the backbone network. Specifically, we use 0.05 and 1.0 for the classifier, and 0.0001 and 0.01 for the backbone network, on the 1\% and 10\% subsets. 
Table~\ref{tab:semi} reports the classification results on ImageNet compared with existing methods using two pre-trained models. From the results, SimCLR increases about 2.3\%, and Barlow Twins increases about 3\% at the 1\% subset setting.

\subsection{Transfer Learning}
\label{sec_app:transfer}

\subsubsection{Video-based Task.}

We transfer pretrained models to a series of video tasks based on an evaluation platform called UniTrack~\cite{wang2021different}. UniTrack does not require additional training and, therefore, provides a more direct comparison between different representations. The video dataset we use contains SOT~\cite{wu2013online}, VOS~\cite{perazzi2016benchmark}, MOT~\cite{milan2016mot16}, MOTS~\cite{voigtlaender2019mots}, PoseTrack~\cite{andriluka2018posetrack}. All methods use the same ResNet-50 backbone and are evaluated based on UniTrack.

\subsubsection{Object Detection and Instance Segmentation.}

We adopt the most commonly used protocol for transfer learning~\cite{zbontar2021barlow}. We evaluate our framework on Pascal VOC and COCO datasets for object detection and instance segmentation. Specifically, we use Faster R-CNN~\cite{ren2015faster} with a C4-backbone~\cite{wu2019detectron2} for the VOC detection task, and use Mask R-CNN~\cite{he2017mask} with the same C4-backbone  and a 1$\times$ schedule for the COCO detection and segmentation tasks. In the fine-tuning stage, we search for an optimal learning rate on the target datasets and keep other parameters the same as in the Detection2 library. In the training stage, we train our Faster R-CNN model on the VOC 07+12 trainval set (16K images) and reduce the initial learning rate by a factor of 10 at 18K and 22K iterations. We also train Faster R-CNN on the VOC 07 trainval set (5K images), but with fewer iterations. For the Mask R-CNN model, we train it on the COCO 2017 train split and report the results on the val split.

\begin{table}[t]
    \centering
        \begin{tabular}{c|cccc}
        \toprule
        Method  & $n_f$=$d/4$ & $n_f$=$d/2$ & $n_f$=$d$ & $n_f$=$2d$ \\
        \midrule
        SimCLR+TC$^2$ & 90.78 & 92.53 & 93.35 & 93.01\\
        BYOL+TC$^2$ & 90.54 & 92.67 & 93.24 & 92.87\\
        \bottomrule
        \end{tabular}
\caption{The impact of the number of factors.}
\label{tab:abl_nf}    
\end{table}

\begin{table}[t]
    \centering
    \begin{tabular}{ccccc}
	\toprule
   $\lambda_s$  & 0.001 & 0.01 & 0.1 & 1 \\
    \hline
    SimCLR+TC$^2$ & 91.65 & 92.53 & 93.45 & 93.15\\
    MAE+TC$^2$ & 93.49 & 96.09 & 96.83 & 96.81 \\
    \hline
    \end{tabular}
    \caption{Parametric analysis of $\lambda_s$.}
    \label{tab:para_lambda_s}
\end{table}

\begin{table}[t]
    \centering
    \begin{tabular}{c|cccc}
    \toprule
    batch size  & 128 & 256 & 512 & 1024\\
    \midrule
    SimCLR+TC$^2$ & 91.78 & 92.12 & 93.12 & 94.84\\
    BYOL+TC$^2$ & 91.54 & 92.08 & 93.35 & 94.89\\
    \bottomrule
    \end{tabular}
\caption{The impact of the batch size.}
\label{tab:bs}
\end{table}

\subsection{Ablation Study Results}

\textbf{The number of generative factors $n_f$.} 
The dimension of the factor space spanned by the generative factors is the number of $n_f$. We analyze the impact of $n_f$ on CIFAR10 dataset. Specifically, we set $n_f$ to $d/4, d/2, d, 2d$. Experimental results in Table~\ref{tab:abl_nf} show that optimal results are obtained when the number of generative factors is the same as the dimension of the embedding space.

\textbf{The parameter $\lambda_s$.}
In this section, we conduct an experimental study of the model trade-off parameters. Our proposed method contain an important  parameter $\lambda_s$ to balance $\mathcal{L}_v$ and $\mathcal{L}_w$. Specifically, we vary in the range [0.001, 0.01, 0.1, 1], and use the SimCLR+TR method to record the classification accuracy of our method using ResNet-18 on the CIFAR-10 dataset. The results in Table \ref{tab:para_lambda_s} show that our method has small changes in accuracy under different parameter settings.

\textbf{Batch size}. We also investigate the impact of batch size on unsupervised learning performance. Specifically, we conduct experiments with batch sizes of 128, 256, 512, and 1024, and report the results in Table~\ref{tab:bs}. The results demonstrate that our method consistently improves model performance across different batch size settings, with relatively minor sensitivity to batch size variations.

\begin{table}
    \centering
    \begin{tabular}{c|cc}
    \toprule
    Method  & Top-1& 5-NN \\
    \midrule
    SimCLR & 91.80 & 88.42\\
    SimCLR+TC$^2$ & \textbf{93.84} & \textbf{90.75}\\
    SimCLR+TC$^2$ w/o $\mathcal{L}_v$ & 91.85  & 88.58\\
    SimCLR+TC$^2$ w/o $\mathcal{L}_w$ & 90.79 & 87.65\\
    SimCLR+TC$^2$ w/o $\mathcal{L}_v$ and $\mathcal{L}_w$ & 88.75  & 84.37\\
    \midrule
    SimCLR+TC$^2$ w/o bi-level & 89.79 & 83.14 \\
    \bottomrule
    \end{tabular}
\caption{The impact of the loss function.}
\label{tab:abl_loss}
\end{table}

\begin{table}
    \centering
        \begin{tabular}{c|ccccc}
        \toprule
        Method  & $n_t$=2 & $n_t$=4 & $n_t$=8 & $n_t$=16 & $n_t$=32 \\
        \midrule
        SimCLR+TC$^2$ & 93.35 & 93.96 & 93.88 & 93.84 & 94.01\\
        BYOL+TC$^2$ & 93.24 & 94.23 & 94.18 & 93.89 & 93.27 \\
        \bottomrule
        \end{tabular}
\caption{The impact of the number of tasks.}
\label{tab:abl_task}
\end{table}

\subsection{Trade-Off Performance}
\label{sec_app:trade-off}
Considering that our proposed method is a plug-and-play method, we conduct experiments for the trade-off performance. We compare the trade-off performance of multiple baselines before and after using our TC$^2$ with the same backbone. Figure \ref{fig:ex_app_trade} illustrates the trade-off performance. The results indicate that incorporating the proposed TC$^2$ significantly improves performance while maintaining an acceptable computational cost and parameter size compared to the original SSL methods without introducing TC$^2$. 

\begin{figure}
    \centering
    \includegraphics[width=0.9\linewidth]{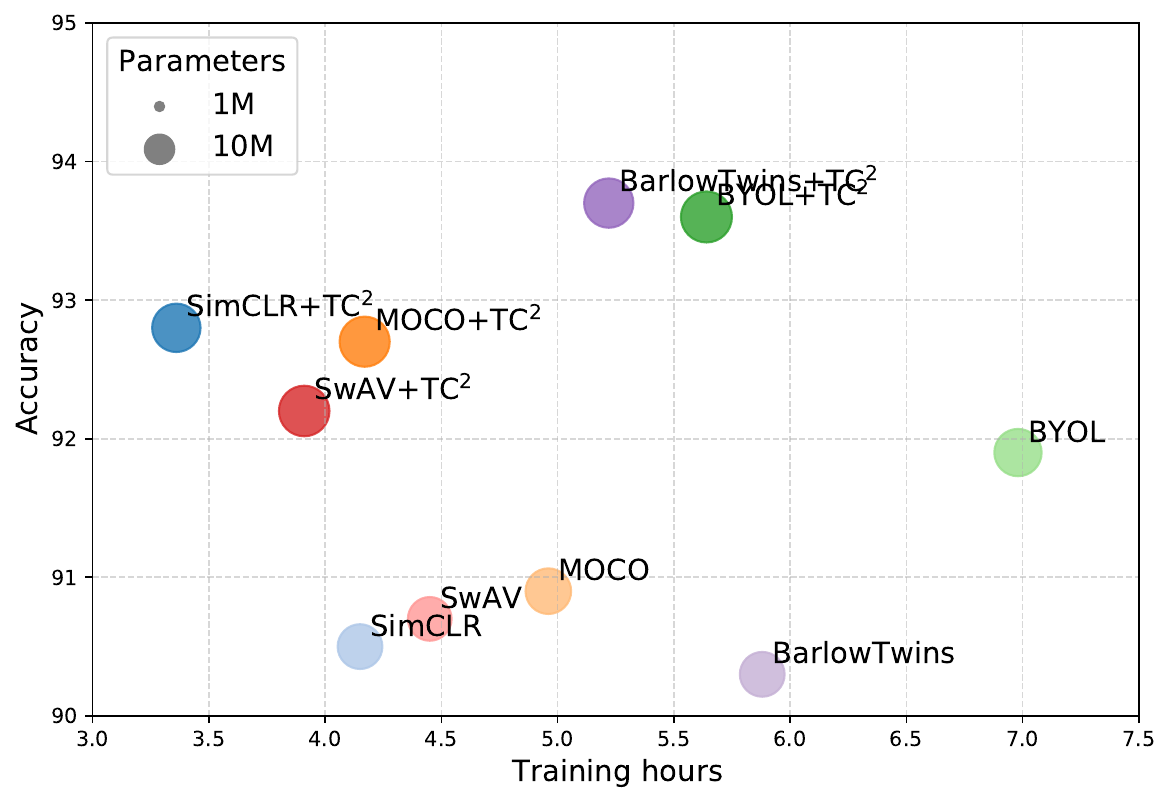}
    \caption{Trade-off performance.}
    \label{fig:ex_app_trade}
\end{figure}

\end{document}